\DeclareRobustCommand{\eg}{e.g.,\@\xspace}
\newcommand{\bool}{\{0,1\}}
\newcommand{\R}{\mathbb{R}}
\newcommand{\E}{\mathbb{E}}
\newcommand{\pr}{\mathbb{P}}
\renewcommand{\Pr}{\pr}
\newcommand{\scA}{\mathcal{A}}
\newcommand{\scF}{\mathcal{F}}
\newcommand{\scH}{\mathcal{H}}
\newcommand{\scL}{\mathcal{L}}
\newcommand{\scO}{\mathcal{O}}
\newcommand{\scP}{\mathcal{P}}
\newcommand{\scT}{\mathcal{T}}
\newcommand{\scV}{\mathcal{V}}
\newcommand{\scX}{\mathcal{X}}
\newcommand{\eps}{\varepsilon}
\DeclareMathOperator{\depth}{depth}
\DeclareMathOperator{\vc}{VC}
\newcommand{\mybox}[1]{%
\par\medskip\noindent%
\framebox[\textwidth][c]{#1}
\par\medskip }
\renewcommand{\tilde}{\widetilde}
\DeclareMathOperator{\Alg}{Alg} 
\DeclareMathOperator{\supp}{supp} 
\DeclareMathOperator{\clos}{clos}
\DeclareMathOperator{\clospw}{clos_{pw}} 
\renewcommand{\epsilon}{\varepsilon}
\renewcommand{\scA}{\mathcal{A}} 
\newcommand{\NN}{\mathbb{N}} 
\newcommand{\N}{\NN} 
\newcommand{\RR}{\mathbb{R}} 
\newcommand{\Trees}[3]{\scT^{#1}_{#2}(#3)}
\newcommand{\AllTrees}{\scT_{\scH}}
\newcommand{\Depth}[3]{\depth^{#1}_{#2}\!\left(#3\right)}
\newcommand{\GCM}[3]{\Gamma^{#1}_{#2}\!\left(#3\right)}
\newcommand{\Pd}{P}
\newcommand{\PD}{\scP}
\newcommand\restr[2]{{
  \left.\kern-\nulldelimiterspace 
  #1 
  \vphantom{\big|} 
  \right|_{#2} 
  }}
\providecommand{\proofsketchname}{Proof sketch}
{%
\par\noindent{\bfseries\upshape \proofsketchname\ }%
}%
{\jmlrQED}
\title{A Theory of Interpretable Approximations}
\begin{document}

\maketitle

\begin{abstract}%
Can a deep neural network be approximated by a small decision tree based on simple features? This question and its variants are behind the growing demand for machine learning models that are \emph{interpretable} by humans. In this work we study such questions by introducing \emph{interpretable approximations}, a notion that captures the idea of approximating a target concept $c$ by a small aggregation of concepts from some base class $\scH$. In particular, we consider the approximation of a binary concept $c$ by decision trees based on a simple class $\scH$ (e.g., of bounded VC dimension), and use the tree depth as a measure of complexity.
Our primary contribution is the following remarkable trichotomy. For any given pair of $\scH$ and $c$, exactly one of these cases holds: (i) $c$ cannot be approximated by~$\scH$ with arbitrary accuracy; (ii) $c$ can be approximated by $\scH$ with arbitrary accuracy,
but there exists no universal rate that bounds the complexity of the approximations as a function of the accuracy;
or (iii) there exists a constant $\kappa$ that depends only on~$\scH$ and $c$ such that, for \emph{any} data distribution and \emph{any} desired accuracy level, $c$ can be approximated by $\scH$ with a complexity not exceeding $\kappa$.
This taxonomy stands in stark contrast to the landscape of supervised classification, which offers a complex array of distribution-free and universally learnable scenarios. We show that, in the case of interpretable approximations, even a slightly nontrivial a-priori guarantee on the complexity of approximations implies approximations with constant (distribution-free and accuracy-free) complexity. We extend our trichotomy to classes $\scH$ of unbounded VC dimension and give characterizations of interpretability based on the algebra generated by $\scH$.
\end{abstract}

\begin{keywords}%
interpretability, learning theory, boosting
\end{keywords}

\section{Introduction}
Many machine learning techniques, such as deep neural networks, produce large and complex models whose inner workings are difficult to grasp. In sectors such as healthcare and law enforcement, where the stakes of automated decisions are high, this is a serious problem: complex models make it hard to explain the rationale behind an outcome, or why two similar inputs produce different outcomes.
In those cases, \emph{interpretable} models may become the preferred choice.
Although there is an ongoing debate around the notion of interpretability \citep{erasmus2021interpretability}, decision trees are typically considered as the quintessential example of interpretable models \citep{molnar2022}: ones that favor a transparent decision-making process, and that allow users to understand how individual features influence predictions.
A line of research in this area studies the extent to which small decision trees can approximate some specific learning models, such as neural networks~\citep{craven1995extracting} and $k$-means classifiers~\citep{dasgupta2020explainable}.
Inspired by these results, we develop a general theory of interpretability viewed as approximability via simple decision trees. Our guiding principle can be summarized as follows.
\mybox{
	\textbf{Interpretable approximations = Small aggregations of simple hypotheses.}
}
In analogy with PAC learning, we focus on binary classification tasks and view a classifier (e.g., a neural network) as a concept $c \subseteq X$, where $X$ is the data domain. Now let $\scH \subseteq 2^X$ be a family of simple hypotheses, for instance decision stumps or halfspaces. Our goal is to understand how well~$c$ can be approximated by aggregating a small set of elements in $\scH$.
To formalize this goal in the language of decision trees we introduce two notions. First, we say that $c$ is \emph{approximable} by $\scH$ if, under any given data distribution, there exists a finite decision tree using splitting functions from $\scH$ that approximates $c$ arbitrarily well. Moreover, if the approximation can be always achieved using a shallow tree, we say that $c$ is \emph{interpretable} by $\scH$.
It is easy to see that, depending on $c$ and $\scH$, one may have interpretability, approximability but not interpretability, or even non-approximability. In \Cref{sec:definitions}, we give explicit examples of pairs $(c,\scH)$ for each one of the three above cases.

Note that in this initial study on the general structure of interpretable approximations we focus on the fundamental question of what conditions ensure the existence of accurate approximations and interpretations. Important topics, such as the informational or computational complexity of obtaining accurate interpretations, are not addressed in this work. 
Note also that we do not make any specific assumption on the data distribution $\Pd$. Our approach is thus in line with standard notions and theories in machine learning---e.g., universal Bayes consistency \citep{devroye2013probabilistic}, PAC learnability \citep{shalev2014understanding}, and universal learnability ~\citep{bousquet2021theory}---as it encompasses both distribution-free and distribution-dependent guarantees.

While our primary focus is not algorithmic, our work reveals profound connections within the algorithmic framework of boosting. Indeed, there is a clear relationship between boosting, which involves the aggregation of weak hypotheses to learn a target concept, and interpretable approximation, which concerns the aggregation of simple hypotheses to approximate a target concept. However, our work uncovers and exploits deeper links at a technical level. In particular, our general construction that gives decision trees whose depth depends logarithmically on the accuracy is based on boosting decision trees, and its analysis uses potential functions from this line of work. Our improved bound for the VC classes, which provides approximating decision trees with constant (accuracy- and distribution-free) depth, is somewhat more subtle; it is also based on a boosting perspective, this time using majority-vote based algorithms and the minimax theorem. However, to eliminate the dependency on the accuracy, we utilize tools from VC theory, particularly uniform convergence.

\subsection{Contributions}

\noindent\textbf{Degrees of interpretability (Section~\ref{sec:definitions}).}
We introduce our learning-theoretic notions of approximability and interpretability. Informally speaking, we use the depth of the shallowest approximating tree to measure the extent to which a certain concept $c$ is interpretable by a given class $\scH$ (e.g., hyperplanes or single features). Approximability is our weakest notion, as we do not constrain the rate at which the rate of the shallowest approximating tree grows as a function of the desired accuracy. Our strongest notion is instead interpretability with a tree depth that is constant with respect to both accuracy and data distribution. In between these two extremes, a wide variety of behaviors is possible, as the tree depth may grow at different rates that may be uniform, or depend on the data distribution (similarly to the distinction between PAC learning and universal learning).

\noindent\textbf{Collapse of the degrees (Section~\ref{sec:trichotomy}).}
We prove that the range of possible behaviors collapses dramatically, and only three cases are actually possible: $c$ is uniformly interpretable by $\scH$, $c$ is approximable but not interpretable by $\scH$, $c$ is not approximable by $\scH$. If the class $\scH$ of splits has bounded VC dimension, which conforms to our request that $\scH$ be simple, we show that whenever $c$ is interpretable (possibly with a distribution-dependent rate) then it is uniformly interpretable by $\scH$ \emph{at constant depth}. This means that, for every data distribution $\Pd$ and every accuracy $\eps > 0$, there exists an $\scH$-based decision tree that approximates $c$ with accuracy $\eps$ and whose depth is bounded \emph{by a constant} depending only on $c,\scH$ (but not on $\Pd,\eps$). 
Thus, whenever $c$ is interpretable at some arbitrary rate, it is in fact interpretable at a constant rate. 
We show a similar collapse for classes $\scH$ of unbounded VC dimension: in this case, we show that interpretability collapses to uniform interpretability at logarithmic depth $\scO\bigl(\log \frac1\eps\bigr)$.

\noindent\textbf{Algebraic characterizations (\Cref{sec:algebraic}).}
We prove that the trichotomy described above can be characterized in terms of algebras and closures over $\scH$. For example, we show that if $\scH$ has bounded VC dimension, then $c$ is interpretable at constant depth if and only if $c$ is in the algebra generated by the \emph{closure} of $\scH$, i.e., the family of all the concepts that can be approximated arbitrarily well by single hypotheses of $\scH$. We also present a simpler characterization when the domain $X$ is countable.

\noindent\textbf{Extension to other complexity measures (\Cref{sec:gen-rep}).}
Finally, we exploit the equivalence between $\scH$-based decision trees and Boolean formulae over $\scH$ to show that the trichotomy above holds for a large class of complexity measures, including not only tree-depth but also, for example, circuit size. In particular, we show that for any complexity measure in our class, interpretability collapses to uniform interpretability at constant complexity rate for VC classes and at polynomial complexity rate for non-VC classes.

\section{Related Work}
According to \citet{molnar2022}, there are different approaches to interpretability in learning. One important distinction is between local explanation, where we explain the prediction of the model on a single data point, and global interpretation, where we explain the model itself. In this work we focus on the latter. A common approach to global interpretation is to use simpler ``interpretable'' models (e.g., decision trees) to approximate more complex ones \citep{craven1995extracting}. This is known as \emph{post-hoc interpretability} \citep{molnar2022}.
For example, \citet{zhang2019interpreting} used decision trees to interpret convolutional neural networks.
Formally, interpretability can be modeled as a property of a classifier. For example, \citet{dziugaite2020enforcing} define a variant of empirical risk minimization (ERM), where each classifier in a given class $\scH$ is either interpretable or not, and the task is to learn an interpretable one even though the target concept is not necessarily interpretable. We generalize this setup by assigning a complexity measure to each classifier, e.g., the depth for decision trees. This allows to trade-off the desired accuracy $\eps$ and the maximum depth of a decision tree one is willing to call interpretable. 
Learning-theoretic perspectives on interpretability are rare and typically not covered in standard books and surveys. One important line of work initiated by \citet{dasgupta2020explainable} deals with the problem of approximating a given $k$-means or $k$-median clustering with decision trees. From this perspective, our setup can be seen as a generalization from clusterings to arbitrary concepts. However, that line of work focuses on efficient algorithms to compute decision trees with $k$ leaves and approximation guarantees in terms of the $k$-means or $k$-medians cost function, not in terms of classification error under a distribution as we do here.
\citet{bastani2017interpretability} discuss a related problem setup where a given classifier is approximated using a decision tree. Under strong assumptions, the authors state convergence results for the proposed decision tree. However, they do not state bounds on the required depth which is assumed to be given as a hyperparameter.
Some algorithmic analyses exist for specific cases of hypothesis spaces and standard explainers. For example, \citet{garreau2020explaining} analyse LIME \citep{ribeiro2016should}, one of the most used explanation techniques. \citet{li2021a} discuss generalization bounds for local explainers. \citet{blanc2021provably} introduce a local variant of our setup with the goal of explaining the classification $f(x)$ of a single instance $x$ using a conjunction with small size (i.e., a small decision list). Their results cannot be used for our goal of global interpretation as one would have to take the union of all the local conjunctions for all (potentially infinite) instances $x$. Closer to our setup, \citet{moshkovitz2021connecting} state bounds on the depth of a decision tree required to fit a linear classifier with margin. Similarly to us, they also strongly rely on boosting arguments.
\citet{vidal2020born} give upper bounds on the number of nodes  of a single decision tree to approximate an ensemble of trees.
While mainly focusing on local explainability, \citet{blanc2021provably} also state bounds on the depth of a decision tree required to fit an arbitrary classifier $f\colon \bool^d\to \bool$ under the uniform distribution on $\bool^d$. 
They do so by relying on classical bounds on the depth in terms of certificate complexity \citep{smyth2002reimer,tardos1989query}. As we focus instead on general hypothesis classes and distributions, their results are not directly comparable to ours. 
\section{Preliminaries and Notation}
Let $X$ be any domain. We denote by $\Pd$ a distribution\footnote{By default we assume a fixed but otherwise arbitrary $\sigma$-algebra on $X$ and that all functions/sets discussed in our theorems are measurable. We also borrow standard assumptions on the underlying $\sigma$-algebra which allow us to use the VC Theorem~\citep{vc1971}. See, e.g.,~\citet{BlumerEHW89}.} on $X$ and by $\PD(X)$ the set of all distributions on $X$, by $\scH\subseteq 2^X$ a hypothesis class on $X$, and by $\vc(X,\scH)$ its VC dimension. We denote by $\Alg(\scH)$ the algebra generated by $\scH$, i.e., the smallest set system $\scA\subseteq 2^X$ closed under complements and finite unions such that $\scH\subseteq\scA$ and $\emptyset,X\in \scA$. The $\sigma$-algebra $\sigma(\scH)$ is the smallest algebra containing $\scH$ that is closed under countable unions.
We denote by $c \in 2^X$ an arbitrary concept (not necessarily in $\scH$). As usual we also view $c$ as a binary classification function $c\colon X\to\bool$. Our goal is to understand how well $c$ can be approximated using aggregations of hypotheses in $\scH$.
We let $\NN$ denote the naturals including $0$, and $\NN^+ = \NN \setminus \{0\}$.

A decision tree over $X$ is a full finite binary tree $T$ with nodes $\scV(T)$, where every leaf $z \in \scL(T)$ holds a label $\ell_z \in \bool$ and every internal node $v \in \scV(T)\setminus \scL(T)$ holds a \emph{decision stump} $f_v \colon X \to \bool$. The depth (or height) of $T$ is denoted as $\depth(T)$. We say $T$ is \emph{$\scH$-based} if $f_v \in \scH$ for all $v \in \scV(T)$, and we denote by $\AllTrees$ the set of all $\scH$-based decision trees.  We also use $T$ to denote the binary classifier $T\colon X \to \bool$ induced by $T$ in the standard way. Note that $\AllTrees \equiv \Alg(\scH)$, as any $\scH$-based tree $T$ can be rewritten as a Boolean formula and vice versa. For every $d\in\NN_+$ we let $\Alg_d(\scH) = \{T \in \Alg(\scH) \,:\, \depth(T) \le d \}$.
Given $\Pd \in \PD(X)$ and a concept $c \in 2^X$, the \emph{loss} of $T$ with respect to $c$ under $\Pd$ is $L_{\Pd}(T,c) = \Pr_{x\sim\Pd}(T(x) \neq c(x)) = \Pd\bigl(T^{-1}(1) \triangle c\bigr)$, where $A \triangle B = (A\setminus B) \cup (B\setminus A)$ is the symmetric difference between $A$ and $B$.

An $\eps$-accurate $\scH$-approximation of $c$ under $\Pd$ is an $\scH$-based decision tree $T$ with $L_{\Pd}(T,c) \le \eps$. The set of all such trees is denoted as $\Trees{c}{\scH}{\eps \mid \Pd}$, which is also known as the $\eps$-Rashomon set \citep{fisher2019all}, and their minimal depth is
\begin{equation}
\Depth{c}{\scH}{\eps \mid \Pd} = \inf_{T\in\Trees{c}{\scH}{\eps \mid \Pd}}\depth(T) \enspace.
\end{equation}

\section{Approximability and Interpretability}
\label{sec:definitions}
This section introduces the key definitions used in our results. We start with the definition of approximability.
\begin{definition}[Approximability]
\label{def:approx}
A concept $c$ is approximable by $\scH$ if $\Trees{c}{\scH}{\eps\mid\Pd}\neq\emptyset$ 
for every distribution $\Pd\in\PD(X)$ and every $\eps>0$.
\end{definition}
Approximability is our weakest notion, as it only requires that for any desired accuracy value a tree approximating $c$ exists under any distribution, without any constraint on its depth. In fact, there may not even exist a function $f$ such that $\Depth{c}{\scH}{\eps \mid \Pd}$ is bounded by $f(\eps)$ for all distributions $\Pd$.

For example, for $X=\R^d$ let 
$c$ be the unit $d$-dimensional Euclidean ball centered at the origin
and $\scH$
be the family of affine halfspaces whose boundary is orthogonal to, say, the $d$-th dimension. Then, any finite aggregation $T$ of such halfspaces is unable to discern points that are aligned along the $i$-th dimension for any $i \ne d$ and, thus, necessarily incurs a constant $L_{\Pd}(T,c)$ for some distribution~$\Pd$.
On the other hand, if we extend $\scH$ to be the family of all halfspaces in $X=\R^d$, then it is possible to show that we can approximate the unit ball $c$
up to any accuracy under any distribution.
Indeed, it is known that a variant of the $1$-nearest neighbour ($1$-NN) algorithm is universally strongly Bayes consistent in essentially separable metric spaces \citep{hanneke2021universal}, and any $1$-NN classifier corresponds to a finite Voronoi partition which can be represented as an $\scH$-based decision tree.
However, we expect the number of Voronoi cells, and thus the depth of the $\scH$-based decision tree representing it, to grow larger as the distribution $\Pd$ concentrates around the decision boundary (that is, the surface of the unit ball $c$).
Consider, for instance, the family of distributions $\Pd_\alpha$ with $\alpha\in(0,1)$, where each $\Pd_\alpha$ has support corresponding to the spherical shell $B^d(1+\alpha)\setminus B^d(1-\alpha)$ with inner radius $1-\alpha$ and outer radius $1+\alpha$ (here we denote by $B^d(r) = \{x\in\R^d : \|x\|_2 \le r\}$ the origin-centered Euclidean ball of radius $r > 0$ in $\R^d$). Then, we expect the number of Voronoi cells defining the decision boundary of the $1$-NN classifiers that guarantee loss at most $0<\eps\le 1$ to grow as $\alpha \to 0^+$.
Figures~\ref{fig:approx-ball-1} and~\ref{fig:approx-ball-2} illustrate these examples in $\R^2$.

Next, we define interpretability. Recall that we view an interpretation as an approximation via a tree of small depth. We formalize ``small'' by requiring the existence of a function that bounds the depth of the tree in terms of its accuracy.
\begin{definition}[Interpretability]
\label{def:interp}
A concept \(c\) is \emph{interpretable} by $\scH$ if there is a function $f \colon (0,1] \to\NN$ such that, for every distribution $\Pd\in\PD(X)$, there exists $\eps_\Pd>0$ for which 
\[
\Depth{c}{\scH}{\eps \mid \Pd} \le f(\eps) \qquad \text{for all} \quad 0 < \eps \le \eps_\Pd \enspace.
\]
If this is the case, then we say that $c$ is interpretable by $\scH$ at depth rate $f$.
\\
A concept \(c\) is \emph{uniformly interpretable} by $\scH$ if there is a function $f' \colon (0,1] \to\NN$ such that
\[
\Depth{c}{\scH}{\eps \mid \Pd} \le f'(\eps) \qquad \text{for all} \quad \Pd\in\PD(X) \quad\text{and}\quad 0 < \eps \le 1 \enspace.
\]
If this is the case, then we say that $c$ is uniformly interpretable by $\scH$ at depth rate $f'$.
\end{definition}
Note that interpretability requires the bound on the depth to hold only for values of $\eps$ that are smaller than a certain threshold $\eps_{\Pd}$ which may depend on the distribution $\Pd$. Uniform interpretability, instead, requires the depth bound to hold for all $\eps$ irrespective of the distribution.

Recalling the above example with the Euclidean space $X=\R^d$ as domain and the family of halfspaces as the hypothesis class $\scH$, if the concept $c$ corresponds to the unit Euclidean ball with margin $\mu>0$ then $c$ is uniformly interpretable.
More formally, such a concept $c$ can be modeled as a partial function $c\colon X\to\bool$ with natural domain $\tilde X\subset X$, where points in the margin belong to $X\setminus\tilde X = B^d(1+\mu)\setminus B^d(1)$ and $c^{-1}(1)=B^d(1)$.
Then, without loss of generality, the same definitions and results apply as if the concept $c$ was a total function by restricting the domain to $\tilde X$ and, for every distribution $\Pd\in\PD(X)$, considering the distribution $\tilde{\Pd}(\cdot) = \Pd(\cdot \mid \tilde X)$ instead. This follows from the fact that we incur no mistakes for any labeling of points that do not belong to the domain of the ``partial'' concept $c$, and the loss of any $\scH$-approximation $T$ of $c$ is thus given by $L_{\tilde{\Pd}}(T,c)$.
By reusing geometric results on the approximation of convex bodies, there exists a polytope $Q$ such that $B^d(1) \subseteq Q \subseteq B^d(1+\mu)$, whose (finite) number of vertices is bounded from above by a function of $d$ and $\mu$ \citep{naszodi2019approximating}. The polytope $Q$ thus separates the positively labeled points $B^d(1)$ from the negatively labeled ones---achieving loss $0$ under any distribution with support $\tilde X$---and it is equivalently representable as an $\scH$-based decision tree with depth bounded by a function of $d$ and $\mu$ only (i.e., the intersection of halfspaces associated to the facets of $Q$).
See Figure~\ref{fig:approx-ball-3} for an illustration of this example in $\R^2$.

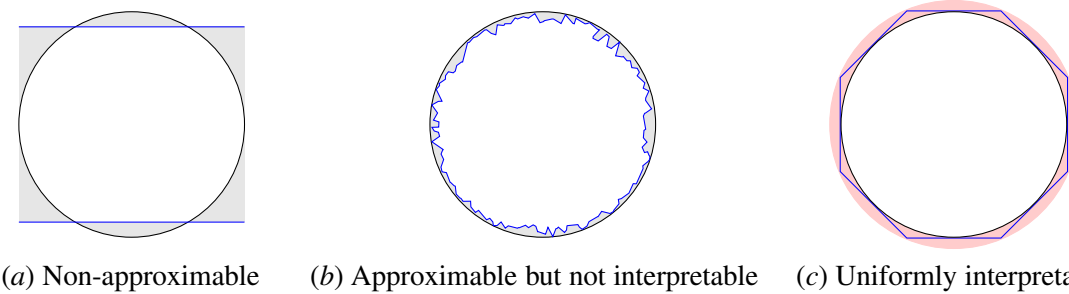
\begin{figure}[ht]
    \centering
    \subfigure[Non-approximable]{%
        \hspace{4mm}\begin{tikzpicture}[scale=3]
    \tikzstyle{mypath}=[draw,color=blue];

    \node (c) at (0.5,0.5) {};
    \node (x1) at (0,0.933) {};
    \node (x2) at (1,0.933) {};
    \node (x3) at (1,0.067) {};
    \node (x4) at (0,0.067) {};
    \filldraw[color=white!0] (c) circle (0.55);

    \fill[color=black!10] (x1.center) -- (x2.center) -- (x3.center) -- (x4.center) -- cycle;
    \filldraw[color=white] (c) circle (0.5);
    \filldraw[fill=black!10!white,draw=none] ($(c) + (120:0.5)$) arc (120:60:0.5);
    \filldraw[fill=black!10!white,draw=none] ($(c) + (240:0.5)$) arc (240:300:0.5);
    \draw (c) circle (0.5);
    \draw[mypath] (x1.center) -- (x2.center);
    \draw[mypath] (x3.center) -- (x4.center);
\end{tikzpicture}\hspace{2mm}\label{fig:approx-ball-1}
    }
\hfill
    \subfigure[Approximable but not interpretable]{%
        \hspace{16mm}\begin{tikzpicture}[scale=3]
    \filldraw[color=white!0] (0.5, 0.5) circle (0.55);
    \tikzstyle{mypath}=[draw,color=blue];
    
    \node (x1) at (0.2713, 0.9041) {};
    \node (x2) at (0.2513, 0.9200) {};
    \node (x3) at (0.2423, 0.9186) {};
    \node (x4) at (0.2080, 0.8896) {};
    \node (x5) at (0.1925, 0.8849) {};
    \node (x6) at (0.1744, 0.8644) {};
    \node (x7) at (0.1724, 0.8643) {};
    \node (x8) at (0.1666, 0.8621) {};
    \node (x9) at (0.1448, 0.7771) {};
    \node (x10) at (0.1402, 0.7772) {};
    \node (x11) at (0.1186, 0.7615) {};
    \node (x12) at (0.1122, 0.7366) {};
    \node (x13) at (0.1031, 0.7291) {};
    \node (x14) at (0.0818, 0.7337) {};
    \node (x15) at (0.0691, 0.7167) {};
    \node (x16) at (0.0684, 0.7006) {};
    \node (x17) at (0.0405, 0.6814) {};
    \node (x18) at (0.0429, 0.6748) {};
    \node (x19) at (0.0411, 0.6705) {};
    \node (x20) at (0.0280, 0.6553) {};
    \node (x21) at (0.0534, 0.6033) {};
    \node (x22) at (0.0369, 0.6024) {};
    \node (x23) at (0.0085, 0.5865) {};
    \node (x24) at (0.0337, 0.5510) {};
    \node (x25) at (0.0157, 0.5193) {};
    \node (x26) at (0.0399, 0.5114) {};
    \node (x27) at (0.0387, 0.4884) {};
    \node (x28) at (0.0089, 0.4873) {};
    \node (x29) at (0.0189, 0.4712) {};
    \node (x30) at (0.0376, 0.4654) {};
    \node (x31) at (0.0428, 0.4474) {};
    \node (x32) at (0.0085, 0.4234) {};
    \node (x33) at (0.0586, 0.3889) {};
    \node (x34) at (0.0539, 0.3809) {};
    \node (x35) at (0.0493, 0.3688) {};
    \node (x36) at (0.0515, 0.3643) {};
    \node (x37) at (0.0608, 0.3448) {};
    \node (x38) at (0.0487, 0.3105) {};
    \node (x39) at (0.0497, 0.3080) {};
    \node (x40) at (0.0536, 0.2902) {};
    \node (x41) at (0.0730, 0.2864) {};
    \node (x42) at (0.0892, 0.2438) {};
    \node (x43) at (0.0916, 0.2431) {};
    \node (x44) at (0.0919, 0.2427) {};
    \node (x45) at (0.1069, 0.2285) {};
    \node (x46) at (0.1124, 0.2262) {};
    \node (x47) at (0.1163, 0.1911) {};
    \node (x48) at (0.1335, 0.1734) {};
    \node (x49) at (0.1365, 0.1730) {};
    \node (x50) at (0.1382, 0.1714) {};
    \node (x51) at (0.1467, 0.1593) {};
    \node (x52) at (0.1801, 0.1597) {};
    \node (x53) at (0.1814, 0.1346) {};
    \node (x54) at (0.2014, 0.1099) {};
    \node (x55) at (0.2154, 0.1191) {};
    \node (x56) at (0.2508, 0.0773) {};
    \node (x57) at (0.2652, 0.0839) {};
    \node (x58) at (0.2787, 0.0678) {};
    \node (x59) at (0.2808, 0.0517) {};
    \node (x60) at (0.2975, 0.0517) {};
    \node (x61) at (0.3153, 0.0677) {};
    \node (x62) at (0.3338, 0.0401) {};
    \node (x63) at (0.3470, 0.0505) {};
    \node (x64) at (0.3878, 0.0400) {};
    \node (x65) at (0.3919, 0.0427) {};
    \node (x66) at (0.3955, 0.0420) {};
    \node (x67) at (0.4234, 0.0531) {};
    \node (x68) at (0.4367, 0.0426) {};
    \node (x69) at (0.4575, 0.0548) {};
    \node (x70) at (0.4693, 0.0483) {};
    \node (x71) at (0.4773, 0.0246) {};
    \node (x72) at (0.4785, 0.0250) {};
    \node (x73) at (0.5122, 0.0134) {};
    \node (x74) at (0.5230, 0.0441) {};
    \node (x75) at (0.5446, 0.0030) {};
    \node (x76) at (0.5580, 0.0375) {};
    \node (x77) at (0.5636, 0.0392) {};
    \node (x78) at (0.5689, 0.0447) {};
    \node (x79) at (0.5965, 0.0212) {};
    \node (x80) at (0.5965, 0.0212) {};
    \node (x81) at (0.6053, 0.0320) {};
    \node (x82) at (0.6072, 0.0501) {};
    \node (x83) at (0.6089, 0.0513) {};
    \node (x84) at (0.6555, 0.0560) {};
    \node (x85) at (0.6624, 0.0447) {};
    \node (x86) at (0.6749, 0.0349) {};
    \node (x87) at (0.6961, 0.0496) {};
    \node (x88) at (0.7048, 0.0690) {};
    \node (x89) at (0.7436, 0.0916) {};
    \node (x90) at (0.7451, 0.0914) {};
    \node (x91) at (0.7608, 0.1146) {};
    \node (x92) at (0.7983, 0.1052) {};
    \node (x93) at (0.8131, 0.1139) {};
    \node (x94) at (0.8145, 0.1194) {};
    \node (x95) at (0.8378, 0.1481) {};
    \node (x96) at (0.8349, 0.1677) {};
    \node (x97) at (0.8480, 0.1712) {};
    \node (x98) at (0.8819, 0.1957) {};
    \node (x99) at (0.9006, 0.2372) {};
    \node (x100) at (0.9031, 0.2378) {};
    \node (x101) at (0.9098, 0.2367) {};
    \node (x102) at (0.9135, 0.2371) {};
    \node (x103) at (0.9145, 0.2373) {};
    \node (x104) at (0.9151, 0.2378) {};
    \node (x105) at (0.9445, 0.3043) {};
    \node (x106) at (0.9432, 0.3166) {};
    \node (x107) at (0.9633, 0.3333) {};
    \node (x108) at (0.9644, 0.3333) {};
    \node (x109) at (0.9724, 0.3430) {};
    \node (x110) at (0.9751, 0.3569) {};
    \node (x111) at (0.9700, 0.3648) {};
    \node (x112) at (0.9645, 0.3791) {};
    \node (x113) at (0.9508, 0.3755) {};
    \node (x114) at (0.9404, 0.3993) {};
    \node (x115) at (0.9619, 0.4219) {};
    \node (x116) at (0.9457, 0.4411) {};
    \node (x117) at (0.9485, 0.4738) {};
    \node (x118) at (0.9485, 0.4738) {};
    \node (x119) at (0.9561, 0.4790) {};
    \node (x120) at (0.9458, 0.5373) {};
    \node (x121) at (0.9514, 0.5495) {};
    \node (x122) at (0.9550, 0.5506) {};
    \node (x123) at (0.9821, 0.5694) {};
    \node (x124) at (0.9556, 0.6068) {};
    \node (x125) at (0.9561, 0.6176) {};
    \node (x126) at (0.9702, 0.6482) {};
    \node (x127) at (0.9453, 0.6611) {};
    \node (x128) at (0.9523, 0.6844) {};
    \node (x129) at (0.9182, 0.7160) {};
    \node (x130) at (0.9281, 0.7316) {};
    \node (x131) at (0.9185, 0.7491) {};
    \node (x132) at (0.8945, 0.7520) {};
    \node (x133) at (0.8883, 0.7819) {};
    \node (x134) at (0.8878, 0.7824) {};
    \node (x135) at (0.8729, 0.7827) {};
    \node (x136) at (0.8567, 0.8016) {};
    \node (x137) at (0.8519, 0.7996) {};
    \node (x138) at (0.8228, 0.8045) {};
    \node (x139) at (0.8402, 0.8653) {};
    \node (x140) at (0.8036, 0.8369) {};
    \node (x141) at (0.7840, 0.8467) {};
    \node (x142) at (0.7944, 0.8890) {};
    \node (x143) at (0.7835, 0.9001) {};
    \node (x144) at (0.7455, 0.8704) {};
    \node (x145) at (0.7225, 0.8820) {};
    \node (x146) at (0.7234, 0.8904) {};
    \node (x147) at (0.7510, 0.9195) {};
    \node (x148) at (0.7422, 0.9226) {};
    \node (x149) at (0.7094, 0.9501) {};
    \node (x150) at (0.7056, 0.9509) {};
    \node (x151) at (0.7029, 0.9500) {};
    \node (x152) at (0.6963, 0.9473) {};
    \node (x153) at (0.6959, 0.9474) {};
    \node (x154) at (0.6874, 0.9464) {};
    \node (x155) at (0.6471, 0.9705) {};
    \node (x156) at (0.6394, 0.9465) {};
    \node (x157) at (0.6337, 0.9483) {};
    \node (x158) at (0.6004, 0.9864) {};
    \node (x159) at (0.5810, 0.9478) {};
    \node (x160) at (0.5476, 0.9676) {};
    \node (x161) at (0.5411, 0.9679) {};
    \node (x162) at (0.5062, 0.9601) {};
    \node (x163) at (0.5041, 0.9567) {};
    \node (x164) at (0.4877, 0.9513) {};
    \node (x165) at (0.4647, 0.9916) {};
    \node (x166) at (0.4352, 0.9711) {};
    \node (x167) at (0.4322, 0.9732) {};
    \node (x168) at (0.4226, 0.9749) {};
    \node (x169) at (0.4140, 0.9779) {};
    \node (x170) at (0.3886, 0.9679) {};
    \node (x171) at (0.3778, 0.9726) {};
    \node (x172) at (0.3584, 0.9471) {};
    \node (x173) at (0.3320, 0.9468) {};
    \node (x174) at (0.3294, 0.9410) {};
    \node (x175) at (0.3247, 0.9393) {};
    \node (x176) at (0.2985, 0.9419) {};
    \node (x177) at (0.2920, 0.8979) {};
    \node (x178) at (0.2919, 0.8978) {};

    \draw[fill=black!10] (0.5, 0.5) circle (0.5);
    \fill[opacity=1,white] (x1.center) \foreach \i in {2,...,178}{ -- (x\i.center) } -- cycle;
    \draw[mypath] (x1.center) \foreach \i in {2,...,178}{ -- (x\i.center) } -- (x1.center);
\end{tikzpicture}\hspace{12mm}\label{fig:approx-ball-2}
    }
\hfill
    \subfigure[Uniformly interpretable]{%
        \hspace{6mm}\begin{tikzpicture}[scale=3]
    \node (c) at (0.5,0.5) {};
    \filldraw[color=red!20] (c) circle (0.55);
    \node[regular polygon,draw,color=blue,fill=red!20,regular polygon sides=8,minimum size=8.5em] at (0.5,0.5) {};
    \draw[fill=white] (c) circle (0.5);
\end{tikzpicture}\hspace{4mm}\label{fig:approx-ball-3}
    }
    \caption{Approximating a disk with halfspaces: the approximation error is the grey-shaded area, while the pink area is the margin region. In (a), we show inapproximability with x-axis-aligned halfspaces. In (b), we show the disk is approximable (but not interpretable) with arbitrary halfspaces, via a Voronoi tessellation with one-sided error. In (c), we show the disk with margin is uniformly interpretable with halfspaces.}
    \label{fig:approx-ball}
\end{figure}

At first glance, our notions of interpretability may appear a little narrow. Suppose that, for every distribution $\Pd$, a concept $c$ is interpretable by $\scH$ at polynomial depth rate, but the degree can grow unbounded with $\Pd$. In other words, for every $d \in \NN_+$ there exists $\Pd_d$ such that $c$ is interpretable by $\scH$ at polynomial depth rate with degree $d$, but not at polynomial depth rate with any smaller degree $d' < d$. Then $c$ is not interpretable by $\scH$ according to our definition, but we could still say that $c$ is interpretable at \emph{polynomial} depth rate. More formally, we could consider the family $\scF$ of all polynomials, and require that for every $\Pd$ there is some $f \in \scF$ that bounds $\Depth{c}{\scH}{\cdot \mid \Pd}$. By varying $\scF$, we obtain a vast range of interpretability rates: logarithmic, sublinear, linear, polynomial, exponential, and so on. Surprisingly, our results show that this hierarchy collapses: an approximable concept $c$ is either not interpretable at all, or is uniformly interpretable at logarithmic rate.

\section{A Trichotomy for Interpretability}\label{sec:trichotomy}
This section states our main result: as soon as a concept is interpretable at \emph{some} rate, then it is uniformly interpretable at a constant rate for VC classes, and at a logarithmic rate in general.
\begin{theorem}[Interpretability trichotomy]
\label{thm:trichotomy}
    Let $X$ be any domain. For every concept $c$ and every VC hypothesis class $\scH$ over $X$ exactly one of the following cases holds:
    \begin{enumerate}[(1),itemsep=3pt,parsep=0pt]
    \item $c$ is not approximable by $\scH$.
    \item $c$ is approximable but not interpretable by $\scH$.
    \item $c$ is uniformly interpretable by $\scH$ at constant depth rate.
    \end{enumerate}
    If $\vc(X,\scH)=\infty$ then all claims above hold true, but with (3) replaced by:
    \begin{enumerate}[(1),itemsep=-2pt]
        \item[(3$'$)] \label{item:vc-infinite} $c$ is uniformly interpretable by $\scH$ at depth rate at most logarithmic.
    \end{enumerate}
    Moreover, all cases are nonempty.
    \end{theorem}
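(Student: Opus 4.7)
The mutual exclusivity of (1), (2), (3) (resp.\ (1), (2), (3$'$)) is immediate from the definitions, since uniform interpretability implies interpretability, which implies approximability. The content of the theorem is therefore the \emph{collapse}: if $c$ is interpretable by $\scH$ at any (possibly distribution-dependent) rate, then it is already uniformly interpretable, at constant depth for VC classes and at logarithmic depth in general. The plan for this collapse is a two-step reduction via a uniform weak interpretation lemma, followed by a boosting strengthening.

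\textbf{Step 1: Uniform weak interpretation.} I would first show that interpretability implies the existence of constants $d_0 \in \NN$ and $\gamma \in (0, 1/2]$, depending only on $c$ and $\scH$, such that for every $\Pd \in \PD(X)$ some tree $T \in \Alg_{d_0}(\scH)$ satisfies $L_\Pd(T,c) \le 1/2 - \gamma$. The argument is by contradiction. If no such $d_0, \gamma$ existed, then for every $n$ there would be a distribution $\Pd_n$ against which every tree in $\Alg_n(\scH)$ has loss at least $1/2 - 1/n$. Taking a finitely additive ultralimit $\Pd^\star$ of $(\Pd_n)$ along a free ultrafilter on $\NN$, the linearity of $\Pd \mapsto \Pd(T \triangle c)$ propagates the bound: any fixed $T \in \Alg(\scH)$ lies in $\Alg_n(\scH)$ for all $n \ge \depth(T)$, so $\Pd^\star(T \triangle c) \ge 1/2$. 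This contradicts interpretability of $c$ under $\Pd^\star$, which guarantees a tree of loss strictly below $1/2$ once $\eps \le \eps_{\Pd^\star}$.

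\textbf{Step 2: Boosting from weak to strong.} With a uniform weak interpretation in hand, I would invoke boosting. For the general case (3$'$), a boosted decision tree construction in the style of Kearns--Mansour, in which each weak learner is a depth-$d_0$ tree attached at the current leaves and the analysis proceeds via a standard potential function, produces after $O(\log(1/\eps))$ effective levels a single $\scH$-based decision tree of depth $O\bigl(\log(1/\eps)\bigr)$ achieving loss at most $\eps$, yielding the logarithmic uniform rate. For the VC case (3), I would apply von~Neumann's minimax theorem to the bilinear payoff $L_\Pd(T,c)$ (linear in $\Pd$ and linear in the mixed strategy on $T$) to extract a distribution $Q^\star$ over $\Alg_{d_0}(\scH)$ whose expected vote correctly classifies $c$ with margin at least $\gamma$ at every $x \in X$. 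Since $\Alg_{d_0}(\scH)$ has VC dimension polynomial in $\vc(X,\scH)$ and $d_0$, VC uniform convergence then sparsifies $Q^\star$: a sample of constant size $N = O(1/\gamma^2)$ from $Q^\star$ yields, with positive probability, an unweighted majority vote whose margin $\gamma/2$ survives uniformly over $x$, hence incurs zero loss against $c$. This finite majority is expressible as a single $\scH$-based decision tree of depth $O(N d_0)$, a constant depending only on $c$ and $\scH$.

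Non-emptiness of the three cases is witnessed by the examples developed in Section~\ref{sec:definitions}: the unit disk in $\R^2$ with axis-aligned halfspaces (case (1)); the unit disk with arbitrary halfspaces, approximable via Voronoi tessellations but with depth diverging as the distribution concentrates on the boundary (case (2)); and the unit disk with positive margin, where classical polytope approximation of convex bodies yields a uniformly constant-depth tree (case (3)). The main obstacle I foresee is in Step 1: $\PD(X)$ carries no natural weak-$*$ compactness absent topological assumptions on $X$, and the ultralimit $\Pd^\star$ is only finitely additive, whereas interpretability is stated for countably additive measures. Resolving this requires either a Stone-space extension of $X$ on which $\Pd^\star$ becomes Radon, with $\scH$ and $c$ lifted compatibly, or a direct construction exploiting the countable structure of $\Alg(\scH)$ modulo $\Pd^\star$-null symmetric differences.
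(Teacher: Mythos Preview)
Your overall architecture---reduce interpretability to a uniform weak-interpretation property, then boost---matches the paper's. The logarithmic case (3$'$) via Kearns--Mansour-style boosting is fine. But both Step~1 and the VC part of Step~2 contain genuine gaps, and in each case the paper's fix is more elementary than what you propose.

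\textbf{Step 1.} The ultralimit obstacle you flag is real, and the Stone-space repair you sketch is unnecessary. The paper sidesteps compactness entirely by reversing the quantifier order: to show that failure of uniform weak interpretation kills interpretability, fix the candidate rate $r$ \emph{first}, and only then choose the bad distributions. Concretely, with $\gamma$ fixed, set $d_n = r\bigl(2^{-n}(\tfrac12-\gamma)\bigr)$, pick $\Pd_{d_n}$ witnessing $\Depth{c}{\scH}{\tfrac12-\gamma \mid \Pd_{d_n}} > d_n$, and form the countably additive mixture $\Pd^\star = \sum_{n\ge 1} 2^{-n}\Pd_{d_n}$. Then $\Depth{c}{\scH}{\eps_n \mid \Pd^\star} > r(\eps_n)$ along $\eps_n = 2^{-n}(\tfrac12-\gamma)\to 0$, so $r$ is not a valid rate for $\Pd^\star$. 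No limit of measures is ever taken.

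\textbf{Step 2, VC case.} Your global minimax step actually fails. Take $X=\NN$, $c=\{0\}$, $\scH=\{\{1,\dots,n\}:n\ge 1\}$. Condition (b) holds with $d_0=1$ for any $\gamma<\tfrac12$, since for every $\Pd$ the depth-$1$ tree splitting on $\{1,\dots,n\}$ has loss $\Pd(x>n)\to 0$. Yet no mixture $Q^\star$ over these trees achieves a positive margin at every $x$: the expected error at $x>0$ is $Q^\star(\{T_n:n<x\})$, which tends to $1$ as $x\to\infty$. So $\inf_D\sup_x > \sup_{\Pd}\inf_T$ here, and no single tree has zero loss against $c$---which is exactly the phenomenon the paper points out right after the theorem (the constant-depth tree must in general depend on both $\eps$ and $\Pd$). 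The paper's remedy is a two-stage localization: given $\Pd$ and $\eps$, first use uniform convergence for the class of all $\scH$-based trees of the target depth to select a \emph{finite} multiset $U\subseteq X$ on which zero error forces $L_\Pd\le\eps$; then apply von~Neumann's minimax on the finite game over $U$, where it is unproblematic, and sparsify via uniform convergence for the \emph{dual} set system $(\scT,U)$ (this is what controls uniformity over $x$, not the primal VC dimension you invoke) to get a constant-size majority perfect on $U$.
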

We emphasize again that Theorem~\ref{thm:trichotomy} is in stark contrast with the behavior of excess risk in terms of training set size observed in statistical learning, where, in the non-uniform (or universal) setting, both exponential and linear rates are possible. It should also be noted that we do not know if case (3$'$) collapses into case (3)---that is, if a constant depth rate holds also for non-VC classes---or if a non-constant rate is in general unavoidable. This is one of the questions the present work leaves open.

We further observe that, while point (3) of \Cref{thm:trichotomy} shows that $c$ is uniformly interpretable by $\scH$ at a constant depth rate, 
this does not necessarily imply the existence of a single $\scH$-based decision tree providing such a guarantee for all values of $\eps>0$.
For example, consider a domain $\scX = \NN$, a concept $c=\{0\}$, and a hypothesis class $\scH = \{ \{1,\dots,n\} : n \in \NN^+ \}$. Now let $\Pd$ be the distribution such that $\Pd(0)=0.5$ and $\Pd(x)=2^{-(x+1)}$ for all $x \in \NN^+$. For any $\eps > 0$ the depth-$1$ decision tree with splitting criterion $h=\{1,\ldots,\lceil\log_2(1/\eps)\rceil\}$ is an $\eps$-accurate approximation of $c$ under $\Pd$, but no $\scH$-based tree is an $\eps$-approximation of $c$ for all $\eps$ simultaneously.

Our proof of \Cref{thm:trichotomy} combines a variety of techniques from different contexts. The first step involves identifying a criterion which can be thought of as a form of ``weak interpretability'' (items~(a) and~(b) in the proof). The rest of the proof demonstrates that if a concept $c$ fails to satisfy this criterion, then it is not interpretable by $\scH$, and if it does, then it is uniformly interpretable by $\scH$. The former impossibility result entails establishing a lower bound on the interpretation rate for an arbitrarily small accuracy with respect to a \emph{fixed} and carefully tailored distribution.
This type of lower bounds are more intricate than distribution-free lower bounds (such as those outlined in the No-Free-Lunch Theorem in the PAC setting) and were studied, e.g., by \cite{AntosL98,BousquetHMST23}. In the complementary case, when $c$ satisfies the weak interpretability criterion with respect to $\scH$, we prove that $c$ is in fact uniformly interpretable by $\scH$ with logarithmic depth, and if $\scH$ has a finite VC dimension, then $c$ is interpretable with constant depth. The logarithmic construction and its analysis builds on ideas and techniques originating from boosting algorithms for decision trees~\citep{KearnsM99,TakimotoM03}. 
The derivation of constant depth approximation when $\scH$ is a VC class relies on a uniform convergence argument~\citep{vc1971} combined with the Minimax Theorem \citep{Neumann1928}. This derivation is also linked to boosting theory and resembles the boosting-based sample compression scheme by \cite{MoranY16}.

{\renewcommand{\proofname}{Proof of Theorem~\ref{thm:trichotomy}.}
\begin{proof}
We start by proving the cases (1)-(3).
Suppose (1) fails, so $\Depth{c}{\scH}{\eps \mid \Pd} < \infty$ for all $\Pd \in \PD(X)$ and all $\eps > 0$. This implies that, for any fixed $\gamma\in(0,\frac12)$, exactly one of the following two cases holds:
\begin{enumerate}[(a),itemsep=-2pt]
    \item for every $d \in \N$ there exists a distribution $\Pd_d$ such that $\Depth{c}{\scH}{\frac12-\gamma \mid \Pd_d} > d$;
    \item there exists $d \in \N$ such that $\Depth{c}{\scH}{\frac12-\gamma \mid \Pd} \le d$ for all distributions $\Pd$.
\end{enumerate}
Suppose (a) holds; we show this implies case (2) of the trichotomy. To this end, we prove that there is no function $r\colon (0,1] \to \N$ such that $c$ is interpretable by $\scH$ at depth rate $r$. Choose indeed any such $r$. For every $n \in \N^+$ let $d_n = r(2^{-n}(\frac12-\gamma))$, and consider the following distribution over $X$:
\begin{align}
    \Pd^* = \sum_{n \in \NN^+} 2^{-n} \cdot \Pd_{d_n} \enspace.
\end{align}
Since $\Pd_{d_n}$ appears in $\Pd^*$ with coefficient $2^{-n}$, this implies that, for $\eps_n = 2^{-n}(\frac12-\gamma)$, any $\eps_n$-accurate $\scH$-interpretation of $c$ under $\Pd^*$ is $(\frac12-\gamma)$-accurate under $\Pd_{d_n}$ and so has depth larger than $d_n=r(\eps_n)$. Hence,
\begin{equation}    
    \Depth{c}{\scH}{\eps_n \mid \Pd^*} \ge \Depth{c}{\scH}{\frac12-\gamma \mid \Pd_{d_n}} > d_n = r(\eps_n)
\end{equation}
holds for all $n\in\N^+$. We conclude that $c$ is not interpretable by $\scH$ at depth rate $r$, as desired.

Now suppose (b) holds; we show this implies case (3) of the trichotomy. 
Let $\scT$ be the set of all binary classifiers that are represented by $\scH$-based decision trees of depth at most~$d$, where $d \in \N$ satisfies $\Depth{c}{\scH}{\frac12-\gamma \mid P} \le d$ for all $P \in \PD(X)$. It is known that $\vc(X,\scH) < \infty$ implies $\vc(X,\scT) < \infty$~\citep{dudley78aggreg}. 

We will first prove the claim by taking as domain an arbitrary but finite subset $U \subseteq X$. Later on we will choose $U$ appropriately as a function of the distribution $P \in \PD(X)$, and this will prove the theorem's claim. Fix then any such $U$, and let $\PD(U)$ be the family of all distributions over $U$. By definition of $d$,
\begin{align}
    \sup_{\Pd \in \PD(U)} \inf_{T \in \scT} L_\Pd(T,c) \le \frac12 - \gamma \enspace. \label{eq:cx_hypothesis}
\end{align}
By von Neumann's minimax theorem, recalling that the value of the game does not change if the column player uses a pure strategy, we have that
\begin{align}
    \sup_{\Pd \in \PD(U)} \inf_{T \in \scT} L_\Pd(T,c) = \inf_{D \in \PD(\scT)} \sup_{x \in U} \E_{T \sim D} L_{\delta_x}(T,c) \enspace, \label{eq:cx_minimax}
\end{align}
where $\PD(\scT)$ is the set of all distributions over $\scT$, $\delta_x$ is the Dirac delta at $x \in U$, and $\E_{T \sim D} L_{\delta_x}(T,c)$ is thus the expected loss on $x$ of a tree $T$ drawn from $D$. Hence, there exists $D^*\in\PD(\scT)$ for which
\begin{align}
    \E_{T \sim D^*} L_{\delta_x}(T,c) \le \frac12 - \gamma \qquad \forall x \in U \enspace, \label{eq:cx_approx_1}
\end{align}
and therefore, since $c(x),T(x) \in \bool$ for all $x$ and $T$,
\begin{align}
    \bigl|c(x) - \Pr_{T \sim D^*}(T(x)=1)\bigr| = \Pr_{T \sim D^*}(T(x) \neq c(x)) \le \frac12 - \gamma \qquad \forall x \in U \enspace. \label{eq:cx_approx}
\end{align}
Let $(\scT,U)$ be the dual set system of $(U,\scT)$. 
Note that $\vc(\scT,U) \le \vc(\scT,X) < 2^{\vc(X,\scT)+1} < \infty$, where the second inequality shows a known relation \citep{assouad1983} between the primal VC dimension $\vc(X,\scT)$ of $(X,\scT)$ and its dual VC dimension $\vc(\scT,X)$. By the classic uniform convergence result of \citet{vc1971}, there exists a multiset $R\subseteq \scT$ with $|R| \le r \coloneqq r(\vc(X,\scT),\gamma,d)$ such that, for every $x \in U$,
\begin{align}
    \left| \frac{|\{T \in R : T(x)=1\}|}{|R|} - \Pr_{T \sim D^*}(T(x)=1) \right| < \frac{\gamma}{2} \enspace. \label{eq:approx_distr}
\end{align}
Together with~\eqref{eq:cx_approx} and~\eqref{eq:approx_distr} this yields
\begin{align}
    \left| \frac{|\{T \in R : T(x)=1\}|}{|R|} - c(x) \right| < \frac12 - 
\frac{\gamma}{2} \label{eq:TR}
\end{align}
by the triangle inequality.\footnote{Note that, if no $D^*$ achieves the infimum of the r.h.s.\ of \cref{eq:cx_minimax}, the same result holds with, say,
$(1-\gamma)/2$ as the r.h.s.\ of \cref{eq:cx_approx_1} because it suffices to show that the l.h.s.\ of \cref{eq:TR} is strictly less than $1/2$ for our purposes. 
}
We now build a $\scH$-based decision tree $T^*_U$ that computes the majority vote over all $T\in R$. This tree can be constructed as follows. Let $T_1,\ldots,T_{|R|}$ be the trees in $R$. Replace each leaf of $T_1$ with a copy of $T_2$; in the resulting tree replace every leaf with a copy of $T_3$, and so on until obtaining $T^*_U$. For each leaf $z\in\scL(T^*_U)$ of $T^*_U$, define its label $\ell_z$ as the majority vote given by leaves of (the copies of) $T_1,\ldots,T_{|R|}$ that are encountered on the path from the root of $T^*_U$ to $z$. Note that $T^*_U$ has depth bounded by $rd$ and, by~\eqref{eq:TR}, computes $c(x)$ for all $x \in U$. Thus, $L_U(T^*_U,c)=0$ where $L_U$ is the expected loss over the uniform distribution over $U$.

We now choose the set $U$ appropriately. Let $\scT^*$ be the family of all $\scH$-based decision trees whose depth is at most $rd$. Because, once again, $\vc(X,\scT^*) < \infty$, by uniform convergence there is a finite multiset $U \subseteq X$ such that, for all $T \in \scT^*$,
$
    \left |L_\Pd(T,c) - L_{U}(T,c)\right| \le \epsilon
$.
Since $T^*_U \in \scT^*$ and $L_U(T^*_U,c)=0$, it follows that $L_{\Pd}(T^*_U,c) \le \epsilon$. This completes the proof of case~(3).
Case (3$'$) follows from \Cref{thm:general_logeps_rate} below, assuming (b) holds.

It remains to prove that all cases are nonempty.
For (1) let $X=\{a,b\}$, $\scH=\{X\}$, $c=\{a\}$, and note that under the uniform distribution no $\scH$-interpretation of $c$ is $\epsilon$-accurate for $\epsilon < \frac12$.
For (3) consider any $X,\scH$ with $\scH \ne \emptyset$ and choose any $c \in \scH$; this holds for (3$'$) too if $\scH$ is not a VC class.
For (2) we show $c,\scH$ that satisfy case (a) above.
Let $X=\NN$, $c=\NN^+$, and $\scH=\{\{i\}: i \in \NN^+\}$. For every $n \in \NN^+$ consider the distribution $\Pd_n$ with support $\{0,\ldots,n\}$ such that $\Pd_n(0)=\frac{1}{2}$ and that $\Pd_n(i)=\frac{1}{2n}$ for every $i \in \{1,\ldots,n\}$. To conclude note that $\Depth{c}{\scH}{\frac12-\gamma \mid \Pd_n}$ is unbounded as a function of $n$ for any constant $\gamma\in(0,\frac12)$.
\end{proof}
}

The proof of case (3$'$) of \Cref{thm:trichotomy} uses the following result. Its proof can be found in Appendix~\ref{section:boosting}, and is an adaptation of the results by \citet{KearnsM99} and \citet{TakimotoM03} on boosting decision trees. The main difference is that, via an adequate modification of the \texttt{TopDown} algorithm \citep{KearnsM99}, we bound the depth rather than the size of the boosted decision tree.
\begin{restatable}{theorem}{generalLogepsRate}\label{thm:general_logeps_rate}
Let $X$ be any domain. For any concept $c$ and any hypothesis class $\scH$ over $X$, if there exist $\gamma \in (0,\frac12)$ and $d \in \NN$ such that $\Depth{c}{\scH}{\frac{1}{2}-\gamma \mid \Pd} \le d$ for all $\Pd \in \PD(X)$, then $\Depth{c}{\scH}{\eps \mid \Pd} \le \frac{d}{2\gamma^{2}}\log\frac{1}{2\eps}$ for all $\Pd \in \PD(X)$ and all $\eps > 0$. 
\end{restatable}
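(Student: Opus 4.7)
The plan is to build the approximating tree iteratively by composing depth-$d$ weak learners, following the template of boosting for top-down decision-tree learners \citep{KearnsM99,TakimotoM03}. I will construct a sequence of $\scH$-based trees $T_0, T_1, \ldots$ with $\depth(T_k) \le kd$, where $T_0$ is a single leaf labeled by the $\Pd$-majority class, and $T_k$ is obtained from $T_{k-1}$ by the following step: for each leaf $z$ of $T_{k-1}$ with cell $R_z$ and conditional distribution $\Pd_z$, apply the weak-learning hypothesis (to a carefully re-weighted version of $\Pd_z$; see below) to obtain a depth-$\le d$ $\scH$-based tree $T_z$ with $L_{\Pd_z}(T_z, c) \le \tfrac12 - \gamma$, splice $T_z$ in place of $z$, and relabel the fresh leaves by $\Pd$-majority. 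It then suffices to show that $L_\Pd(T_k, c) \le \eps$ for $k = \lceil \log(1/(2\eps))/(2\gamma^2)\rceil$, since then $\depth(T_k) \le kd \le \frac{d}{2\gamma^2}\log\frac{1}{2\eps}$.

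To track progress I will use the Kearns--Mansour-style potential
\[
\phi(T) \,=\, \sum_{z \in \scL(T)} \Pd(R_z)\, G(p_z), \qquad G(p) \,=\, 2\sqrt{p(1-p)},
\]
where $p_z = \Pd(c = 1 \mid R_z)$. Since $G$ is concave on $[0,1]$ and $G(p) \ge 2\min(p,1-p)$, one has $L_\Pd(T, c) \le \phi(T)/2$. The heart of the argument is a per-round multiplicative decrease $\phi(T_k) \le \sqrt{1-4\gamma^2}\cdot \phi(T_{k-1})$. For a refined leaf $z$ with children of conditional weights $q_i$ and positive rates $p_i$, the weak-learning hypothesis gives $\bar m := \sum_i q_i \min(p_i, 1-p_i) \le \tfrac12 - \gamma$; Jensen's inequality for the concave map $m \mapsto 2\sqrt{m(1-m)}$ on $[0,\tfrac12]$ then yields $\sum_i q_i G(p_i) \le 2\sqrt{\bar m(1-\bar m)} \le \sqrt{1-4\gamma^2}$. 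Paired with an AdaBoost-style re-weighting of $\Pd_z$ that disqualifies the constant classifier, this will be sharpened to $\sum_i q_i G(p_i) \le G(p_z)\sqrt{1-4\gamma^2}$, and summing $\Pd(R_z)$ times this estimate over all leaves of $T_{k-1}$ gives the multiplicative form on $\phi$.

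Iterating yields $\phi(T_k) \le (1-4\gamma^2)^{k/2}$, and using $-\log(1-x) \ge x$ the choice $k = \lceil \log(1/(2\eps))/(2\gamma^2)\rceil$ makes $\phi(T_k) \le 2\eps$, hence $L_\Pd(T_k, c) \le \eps$, completing the argument. The step I expect to be hardest is establishing the \emph{multiplicative} (rather than merely absolute) decrease per round: a direct Jensen-plus-weak-learning calculation only gives $\sum_i q_i G(p_i) \le \sqrt{1-4\gamma^2}$ uniformly in $G(p_z)$, which does not compound across rounds (it would only prove $\phi(T_k) \le \sqrt{1-4\gamma^2}$ for every $k \ge 1$). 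Upgrading this to $\sum_i q_i G(p_i) \le G(p_z)\sqrt{1-4\gamma^2}$ requires feeding the weak-learning oracle with a re-weighted distribution at each leaf under which the trivial classifier has error exactly $\tfrac12$, so that the guaranteed advantage translates into a decrease proportional to the current potential at $z$; this re-weighting is precisely the ``adequate modification of the TopDown algorithm'' referred to just before the statement.
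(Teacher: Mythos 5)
Your approach is essentially the paper's: a level-by-level TopDown-style boosting procedure that splices a depth-$d$ weak learner (obtained by applying the hypothesis to the balanced version of each leaf's conditional distribution) into every current leaf, while tracking the concave Gini-type potential $\sum_z \Pd(R_z)\,G(p_z)$ and showing it contracts multiplicatively by a factor of roughly $1-\Theta(\gamma^2)$ at each phase. The one step you correctly identify but leave unproved---transferring the multiplicative drop from the balanced conditional distribution back to the original one, so that the per-leaf Jensen bound $\sqrt{1-4\gamma^2}$ upgrades to $G(p_z)\sqrt{1-4\gamma^2}$---is exactly what the paper handles by citing Takimoto and Maruoka's Proposition~5 (and their Lemma~A.1 supplies the per-leaf drop with the slightly looser but equally sufficient factor $1-2\gamma^2$ in place of your $\sqrt{1-4\gamma^2}$).
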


\section{Algebraic Characterizations}
\label{sec:algebraic}
In this section we show that the notions of approximability and interpretability admit set-theoretical and measure-theoretical characterizations based on properties of $\scH$ and the algebras it generates.

To begin with, we need a notion of closure of $\scH$. Loosely speaking, we want to include all concepts that, under every distribution, can be approximated arbitrarily well by single elements of $\scH$. In other words, these are the concepts that are approximable by $\scH$ using decision trees of depth $1$.
\begin{definition}
The \emph{closure} of $\scH \subseteq 2^X$ is
\begin{equation}
    \clos(\scH)
=
    \Bigl\{h\subseteq X \,\big\vert\, \forall \Pd\in\PD(X),\,\exists h_1,h_2,\ldots\in\scH \,\text{s.t.}\, \lim\limits_{n\rightarrow\infty} \Pd(h\triangle h_n)=0 \Bigr\} \enspace.
\end{equation}
\end{definition}
Observe that $\clos(\scH) \supseteq \scH$ by definition. To illustrate the closure let us discuss the hypothesis class $\scH$ of rational halfspaces in $\RR^2$, i.e., sets of the form $\bigl\{\{(x,y) \mid ax + by + d\geq 0\}: a,b,d\in\mathbb{Q}\bigr\}$.
Every concept $c\colon \mathbb{R}^2\to \bool$ is approximable by $\scH$, as before, relying on the $1$-NN algorithm.
 Halfspaces with real coefficients such as $\{(x,y)\mid x+y\geq \sqrt{2}\}$ are not in $\scH$ but are interpretable by $\scH$ with depth~$1$. In general, the closure is related to the concept of universally measurable sets.

We start with the following lemma, which is derived from well-known results in measure theory (see the proof in Appendix~\ref{section:further_proofs}).
\begin{restatable}{lem}{closOfAlgIsCloseOfSigma}
\label{lem:clos_of_alg_is_clos_of_sigma}
    Let $X$ be any domain and $\scH\subseteq 2^X$.
    Then, $\clos(\sigma(\scH)) = \clos(\Alg(\scH))$.
\end{restatable}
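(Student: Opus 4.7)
The plan is to prove the two inclusions separately. The forward direction $\clos(\Alg(\scH)) \subseteq \clos(\sigma(\scH))$ is immediate from the monotonicity of $\clos(\cdot)$ in its argument: since $\Alg(\scH) \subseteq \sigma(\scH)$, any approximating sequence $h_1, h_2, \ldots \in \Alg(\scH)$ witnessing $h \in \clos(\Alg(\scH))$ automatically lies in $\sigma(\scH)$ and thus witnesses $h \in \clos(\sigma(\scH))$.

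For the reverse direction $\clos(\sigma(\scH)) \subseteq \clos(\Alg(\scH))$, my approach rests on the classical measure-theoretic fact that $\Alg(\scH)$ is dense in $\sigma(\scH)$ under the symmetric-difference pseudo-metric $d_\Pd(A,B) = \Pd(A \triangle B)$: for any probability $\Pd$, any $g \in \sigma(\scH)$, and any $\eta > 0$ there exists $a \in \Alg(\scH)$ with $\Pd(g \triangle a) < \eta$. Given this, fix $h \in \clos(\sigma(\scH))$ and $\Pd \in \PD(X)$, choose $g_1,g_2,\ldots \in \sigma(\scH)$ with $\Pd(h \triangle g_n) \to 0$, and for each $n$ pick $a_n \in \Alg(\scH)$ with $\Pd(g_n \triangle a_n) < 1/n$. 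Because $d_\Pd$ satisfies the triangle inequality, $\Pd(h \triangle a_n) \le \Pd(h \triangle g_n) + 1/n \to 0$, so $h \in \clos(\Alg(\scH))$.

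The main obstacle, and the only place where real work is needed, is establishing the density claim, which is not assumed earlier in the excerpt. I would prove it by the standard $\sigma$-algebra / monotone-class argument: define
\[
    \scD = \bigl\{ g \in \sigma(\scH) : \forall \eta > 0,\ \exists a \in \Alg(\scH) \text{ with } \Pd(g \triangle a) < \eta \bigr\}
\]
and verify that $\scD$ is a $\sigma$-algebra containing $\Alg(\scH)$. Closure under complementation uses the identity $A^c \triangle B^c = A \triangle B$. Closure under countable unions uses continuity of $\Pd$ from below: given $g = \bigcup_n g_n$ with $g_n \in \scD$, first choose $N$ so that $\Pd\bigl(g \setminus \bigcup_{n \le N} g_n\bigr) < \eta/2$, then approximate each $g_n$ ($n \le N$) by $a_n \in \Alg(\scH)$ within $\eta/(2N)$, and take $a = \bigcup_{n \le N} a_n \in \Alg(\scH)$. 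Since $\scH \subseteq \Alg(\scH) \subseteq \scD$ and $\scD$ is a $\sigma$-algebra, $\scD = \sigma(\scH)$. Alternatively, this density result is a textbook fact (e.g., Halmos, \emph{Measure Theory}) that can be simply cited.
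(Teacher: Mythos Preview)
Your proposal is correct and follows essentially the same approach as the paper: both directions are handled identically, with the reverse inclusion relying on the density of $\Alg(\scH)$ in $\sigma(\scH)$ under the pseudo-metric $d_\Pd$, combined with the triangle inequality. The paper simply cites Halmos (\emph{Measure Theory}, Section~13, Theorem~D) for the density fact, whereas you additionally sketch the standard $\sigma$-algebra argument for it.
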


We now state the algebraic characterization of the concepts that are approximable by a given hypothesis class $\scH$ on some domain $X$.
\begin{theorem}[Algebraic characterization of approximability]
\label{thm:algebraic_univ_approx}
    Let $X$ be any domain and $\scH$ any hypothesis class over $X$. A concept $c\subseteq X$ is approximable by $\scH$ if and only if $c \in \clos(\sigma(\scH))$.
\end{theorem}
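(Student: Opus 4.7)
The plan is to prove the theorem in essentially two moves: recognize approximability as a statement about membership in $\clos(\Alg(\scH))$, and then invoke \Cref{lem:clos_of_alg_is_clos_of_sigma} to pass from the algebra to the $\sigma$-algebra.

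First, I would unpack the definitions. The preliminaries already establish $\AllTrees \equiv \Alg(\scH)$, since every $\scH$-based decision tree can be rewritten as a finite Boolean combination of elements of $\scH$ and vice versa. Hence, by \Cref{def:approx}, $c$ is approximable by $\scH$ if and only if for every $\Pd \in \PD(X)$ and every $\eps > 0$ there is some $T \in \Alg(\scH)$ with $\Pd(T \triangle c) \le \eps$.

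Second, I would show that this ``for every $\eps$'' formulation coincides with the ``sequential limit'' formulation appearing in the definition of $\clos(\cdot)$. Concretely, if for every $\eps>0$ some $T \in \Alg(\scH)$ satisfies $\Pd(T \triangle c) \le \eps$, then taking $\eps_n = 1/n$ produces a sequence $T_n \in \Alg(\scH)$ with $\Pd(T_n \triangle c) \to 0$. Conversely, such a sequence gives, for each $\eps > 0$, a term $T_n$ with $\Pd(T_n \triangle c) \le \eps$. Applying this to every $\Pd$ yields the equivalence
\[
    c \text{ is approximable by } \scH \iff c \in \clos(\Alg(\scH)) \enspace.
\]
Finally, I would apply \Cref{lem:clos_of_alg_is_clos_of_sigma} to rewrite $\clos(\Alg(\scH)) = \clos(\sigma(\scH))$, obtaining the claim.

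The real content is therefore concentrated in the lemma, whose nontrivial direction $\clos(\sigma(\scH)) \subseteq \clos(\Alg(\scH))$ relies on the standard measure-theoretic fact that the algebra $\Alg(\scH)$ is dense in the $\sigma$-algebra $\sigma(\scH)$ under the pseudometric $(A,B) \mapsto \Pd(A \triangle B)$, for any fixed $\Pd$; the reverse inclusion is immediate from $\Alg(\scH) \subseteq \sigma(\scH)$. The theorem itself, once the lemma is in hand, is just a rewording of the approximability definition, so the main obstacle is really absorbed into the lemma (proved separately in the appendix) rather than the theorem's own proof.
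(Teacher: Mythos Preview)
Your proposal is correct and matches the paper's proof essentially line for line: both directions reduce approximability to membership in $\clos(\Alg(\scH))$ via the identification $\AllTrees \equiv \Alg(\scH)$ and a routine $\eps$-to-sequence conversion (the paper uses $\eps_n = 2^{-n}$ rather than $1/n$, an immaterial difference), and then invoke \Cref{lem:clos_of_alg_is_clos_of_sigma} to pass to $\clos(\sigma(\scH))$. Your closing remark that the substantive work lives in the lemma is also in line with how the paper structures things.
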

\begin{proof}
    Suppose $c$ is universally approximable by $\scH$.
    Let $\Pd \in \PD(X)$ be any distribution. Then, for every $\eps > 0$ there exists an $\eps$-accurate $\scH$-approximation $T \in \Alg(\scH)$ of $c$ under $\Pd$. Then $\Pd(T \triangle c) = L_\Pd(T, c) \le \eps$. Consider now the sequence $T_1,T_2,\ldots \in \Alg(\scH)$ such that, for each ${n\in\N_+}$, $T_n$ is an $\eps_n$-accurate $\scH$-approximation of $c$ under $\Pd$ with the choice $\eps_n=2^{-n}$. The sequence $(T_n)_{n\in\N_+}$ is such that $\lim_{n\to\infty} \Pd(T_n \triangle c) \le \lim_{n\to\infty} 2^{-n} = 0$, and thus $c \in \clos(\Alg(\scH))=\clos(\sigma(\scH))$, where the latter equality follows by Lemma~\ref{lem:clos_of_alg_is_clos_of_sigma}.

    Now suppose $c \in \clos(\sigma(\scH))=\clos(\Alg(\scH))$.
    Fix a distribution $\Pd \in \PD(X)$ and $\eps>0$. By definition of closure, and because $\Alg(\scH) \equiv \AllTrees$, there exists a sequence $T_1,T_2,\ldots \in \Alg(\scH)$ of trees such that $\lim_{n \to \infty} \Pd(T_n \triangle c) = 0$, and thus there exists some $i\in\NN_+$ such that $\Pd(T_i \triangle c) \le \eps$. This implies that $T_i$ is an $\eps$-accurate $\scH$-approximation of $c$ under $\Pd$ with finite depth. As this holds for every $\Pd$ and every $\eps > 0$, it follows that $c$ is universally approximable by $\scH$.
\end{proof}
Furthermore, we manage to prove an algebraic characterization for the concepts that are uniformly interpretable, given a VC class $\scH$ on some domain $X$.
\begin{theorem}[Characterization of uniform interpretability for VC classes] \label{thm:algebraic_uniform_vc}
Let $X$ be any domain and let $\scH$ be a VC hypothesis class over $X$. A concept $c$ is uniformly interpretable (at a constant depth) if and only if $c\in\bigcup_{d=1}^\infty \clos(\Alg_d(\scH))$.
\end{theorem}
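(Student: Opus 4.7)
The plan is to show that the equivalence follows directly from unwinding the definitions of uniform interpretability at constant depth, of $\clos(\cdot)$, and of $\Alg_d(\scH)$. Each direction amounts to routine quantifier manipulation; the VC assumption is present because, by Theorem~\ref{thm:trichotomy}, constant-depth uniform interpretability is the natural regime in the VC setting, but it plays no real role in either implication of this characterization.

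For the forward direction I would assume $c$ is uniformly interpretable by $\scH$ at constant depth rate, i.e., there exists $d \in \NN$ with $\Depth{c}{\scH}{\eps \mid \Pd} \le d$ for every $\Pd \in \PD(X)$ and every $\eps > 0$. Fixing any $\Pd$ and extracting, for each $n \in \NN^+$, a tree $T_n \in \Alg_d(\scH)$ with $L_\Pd(T_n, c) \le 1/n$ produces a sequence with $\Pd(T_n \triangle c) \to 0$. Since $\Pd$ was arbitrary, this matches the closure definition verbatim and gives $c \in \clos(\Alg_d(\scH)) \subseteq \bigcup_{d \ge 1} \clos(\Alg_d(\scH))$. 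For the converse, I would fix $d \ge 1$ such that $c \in \clos(\Alg_d(\scH))$; for arbitrary $\Pd$ and $\eps > 0$, the closure definition produces some $T \in \Alg_d(\scH)$ with $L_\Pd(T, c) = \Pd(T \triangle c) \le \eps$, and since every element of $\Alg_d(\scH)$ has depth at most $d$, this yields $\Depth{c}{\scH}{\eps \mid \Pd} \le d$ uniformly in $\Pd$ and $\eps$, so $c$ is uniformly interpretable at the constant rate $f' \equiv d$.

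I do not foresee a real obstacle here: the only point that deserves explicit attention is the quantifier alignment, namely that the bound $d$ appearing in $c \in \clos(\Alg_d(\scH))$ is global across distributions (one approximating sequence per $\Pd$, but the depth cap $d$ is fixed a priori). This is exactly the structure required by uniform interpretability at constant depth (a single constant function $f'$ valid for all $\Pd$ and all $\eps$), so the two conditions are essentially tautological translations of one another via the identity $L_\Pd(T,c) = \Pd(T \triangle c)$.
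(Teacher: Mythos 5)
Your argument is correct and spells out precisely the definitional unwinding the paper alludes to when it says ``using an argument similar to the one used in the proof of Theorem~\ref{thm:algebraic_univ_approx}.'' The one thing you slightly understate is the role of the VC assumption: the paper's proof opens by invoking item~(3) of Theorem~\ref{thm:trichotomy} to upgrade ``uniformly interpretable'' (at some arbitrary rate) to ``uniformly interpretable at constant depth,'' and this step is where VC genuinely enters the forward direction. Without it, a concept could in principle be uniformly interpretable only at a non-constant rate (say, logarithmic, as the trichotomy permits for non-VC classes, and it is stated as open whether that collapses to constant), in which case the concept need not lie in any single $\clos(\Alg_d(\scH))$. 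Your forward direction begins from the constant-depth hypothesis and is therefore, as you say, essentially a tautology; this is a defensible reading given the parenthetical in the theorem statement, but to characterize uniform interpretability \emph{per se} one should first apply the trichotomy to reduce to the constant-depth case, exactly as the paper does. Once that reduction is made, your two directions match the paper's intended argument, and your write-up has the merit of making the backward direction explicit rather than implicit.
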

\begin{proof}
Since $\vc(X,\scH)<\infty$, item~(3) of Theorem~\ref{thm:trichotomy} implies that there exists $d \in \N$ such that, for all $\Pd \in \PD(X)$ and all $\eps > 0$, $\Depth{c}{\scH}{\eps \mid \Pd}\leq d$. Using an argument similar to the one used in the proof of Theorem~\ref{thm:algebraic_univ_approx}, we then conclude that $c \in \clos(\Alg_d(\scH))$. Hence, the set of concepts that are uniformly interpretable by $\scH$ is precisely $\bigcup_{d=1}^\infty \clos(\Alg_d(\scH))$.
\end{proof}

If the domain $X$ is countable, then closure reduces to pointwise convergence and our algebraic characterization becomes simpler. This is formalized in the next theorem, whose proof relies on some technical lemmas and can be found in \Cref{section:further_proofs}. Namely, we show that $\clos(\sigma(\scH))=\sigma(\scH)$ and $\bigcup_{d=1}^\infty \clos(\Alg_d(\scH)) = \Alg(\clos(\scH))$.
\begin{restatable}{theorem}{algebraicCountable}\!\emph{\textbf{(Characterization for VC classes and countable domains)}}\;\label{thm:algebraic_countable}
Let $X$ be any countable domain, let $c$ be any concept, and let $\scH$ be a VC hypothesis class over $X$. Then:
\begin{enumerate}[itemsep=2pt,parsep=0pt,topsep=0pt]
    \item $c$ is approximable by $\scH$ if and only if $c \in \sigma(\scH)$.
    \item $c$ is approximable but not interpretable by $\scH$ if and only if $c\in \sigma(\scH) \setminus \Alg(\clos(\scH))$.
    \item $c$ is uniformly interpretable by $\scH$ if and only if $c\in \Alg(\clos(\scH))$.
\end{enumerate}
\end{restatable}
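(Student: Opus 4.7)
The plan is to deduce the three equivalences from two structural identities specific to countable domains, namely (A) $\clos(\sigma(\scH)) = \sigma(\scH)$ and (B) $\bigcup_{d\geq 1} \clos(\Alg_d(\scH)) = \Alg(\clos(\scH))$. Granted (A) and (B), part~(1) is immediate from \Cref{thm:algebraic_univ_approx}, part~(3) is immediate from \Cref{thm:algebraic_uniform_vc}, and part~(2) follows by the trichotomy (\Cref{thm:trichotomy}): since $\scH$ is a VC class, ``$c$ interpretable'' collapses to ``$c$ uniformly interpretable at constant depth'', so being approximable but not interpretable is the same as being approximable but not uniformly interpretable, which by (1) and (3) is precisely $\sigma(\scH) \setminus \Alg(\clos(\scH))$.

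For (A), the inclusion $\sigma(\scH) \subseteq \clos(\sigma(\scH))$ is trivial. For the reverse, the key observation is that on a countable $X$ one can choose a distribution $P^{\star}$ with $P^{\star}(\{x\}) > 0$ for every $x\in X$. Given $c\in \clos(\sigma(\scH))$, any sequence $(A_n)\subseteq \sigma(\scH)$ with $P^{\star}(A_n\triangle c)\to 0$ must then satisfy $\I[x\in A_n] \to \I[x\in c]$ pointwise, since convergence in $P^{\star}$-measure at a point of positive mass is pointwise. Writing $c=\liminf_n A_n=\bigcup_m\bigcap_{n\geq m} A_n$ once the pointwise limit exists everywhere realizes $c$ as a countable Boolean combination of members of $\sigma(\scH)$, hence $c\in\sigma(\scH)$.

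For (B), the inclusion $\Alg(\clos(\scH))\subseteq \bigcup_d \clos(\Alg_d(\scH))$ is a triangle-inequality argument: any $c = F(g_1,\dots,g_k)$ with $g_i\in \clos(\scH)$ and $F\colon\{0,1\}^k\to\{0,1\}$ a Boolean function can, for any $P$ and $\eps$, be approximated by $F(h_1,\dots,h_k)$ with $h_i\in\scH$ chosen so that $P(h_i\triangle g_i)\leq \eps/k$; since any Boolean function of $k$ bits is computed by a depth-$k$ decision tree, this yields $c\in\clos(\Alg_k(\scH))$. The reverse inclusion is the main obstacle. Assuming $c\in\clos(\Alg_d(\scH))$, pick again a full-support $P^{\star}$ and obtain trees $T_n \in \Alg_d(\scH)$ with $T_n\to c$ pointwise. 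Every $T_n$ is described by a finite combinatorial skeleton (tree shape plus leaf labels, of which there are only finitely many) together with a tuple of at most $k\leq 2^d-1$ internal hypotheses $(h_{n,1},\dots,h_{n,k})\in\scH^k$. Pigeonholing on the skeleton, I pass to a subsequence along which $T_n = F(h_{n,1},\dots,h_{n,k})$ for a single fixed Boolean function $F$ of fixed arity $k$. Exploiting the countability of $X$, a diagonal extraction yields a further subsequence along which every $h_{n,j}(x)$ stabilizes in $n$ for every $x$, producing pointwise limits $g_j$ with $h_{n,j}\to g_j$; by dominated convergence each $g_j\in\clos(\scH)$. Passing to the pointwise limit in $T_n(x) = F(h_{n,1}(x),\dots,h_{n,k}(x))$ gives $c(x)=F(g_1(x),\dots,g_k(x))$, so $c\in\Alg(\clos(\scH))$.

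The principal technical obstacle is this last step, coupling pigeonhole on the finite skeleton with a diagonal extraction across a countable domain to promote the $\scH$-valued internal labels of a converging sequence of bounded-depth trees to well-defined pointwise limits that lie in $\clos(\scH)$. The countability of $X$ is essential both for the existence of a full-support $P^{\star}$, which upgrades convergence in $P^{\star}$-measure to pointwise convergence, and for the diagonal extraction itself. The VC hypothesis does not enter (A) or (B); it only enters via \Cref{thm:algebraic_uniform_vc} and the trichotomy, which supply the link between uniform interpretability and finite-depth approximability underlying part~(3) and hence part~(2).
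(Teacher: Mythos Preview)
Your proposal is correct and follows essentially the same route as the paper: the paper also reduces the theorem to the two identities $\clos(\sigma(\scH))=\sigma(\scH)$ and $\bigcup_{d}\clos(\Alg_d(\scH))=\Alg(\clos(\scH))$, proving the first via a full-support measure and $\liminf$, and the hard inclusion of the second via pigeonholing on tree shape followed by iterated/diagonal extraction of pointwise-convergent node hypotheses (its \Cref{lem:countable_clos_sigma} and \Cref{lem:dfinterp_alg_clos}). Your treatment is slightly more complete in that you spell out the easy inclusion $\Alg(\clos(\scH))\subseteq\bigcup_d\clos(\Alg_d(\scH))$, and your remark that the VC assumption is used only through \Cref{thm:algebraic_uniform_vc} and the trichotomy (not in (A) or (B)) is accurate---the paper states VC as a hypothesis in \Cref{lem:dfinterp_alg_clos} but does not invoke it in the proof.
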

\section{General Representations}
\label{sec:gen-rep}
Although shallow decision trees are the blueprint of interpretable models, our theory naturally extends to ways of measuring the complexity of elements in $\Alg(\scH)$ different from the tree depth. Next, we define a set of minimal conditions (satisfied, e.g., by tree depth) that a function must satisfy to be used as a complexity measure for $\Alg(\scH)$.
\begin{definition}\label{def:gcm}
    Let $X$ be any domain and $\scH$ a hypothesis class over $X$. A function $\Gamma\colon\Alg(\scH)\to\NN$ is a \emph{graded complexity measure} if:
    \begin{enumerate}[itemsep=2pt,parsep=0pt]
        \item $\Gamma(f)= 0$ for all $f\in\scH$,
        \item $\Gamma(f_1\cup f_2)\leq 1+\Gamma(f_1)+\Gamma(f_2)$ for all $f_1,f_2\in\Alg(\scH)$, 
        \item $\Gamma(f_1\cap f_2)\leq 1+\Gamma(f_1)+\Gamma(f_2)$ for all $f_1,f_2\in\Alg(\scH)$, and
        \item $\Gamma(X\setminus f)\leq 1 + \Gamma(f)$ for all $f\in\Alg(\scH)$.
    \end{enumerate}
    The minimal complexity of an $\epsilon$-accurate $\scH$-interpretation of $c$ under $\Pd$ is
    \begin{equation} 
        \GCM{c}{\scH}{\eps \mid \Pd} =
        \inf_{T\in\Alg(\scH) :\, L_\Pd(T,c)\leq \eps} \Gamma(T)
        \enspace.
    \end{equation}
\end{definition}
The definitions of approximability, interpretability, and uniform interpretability are readily generalized to an arbitrary graded complexity measure, by simply replacing $\depth(\cdot)$ with $\GCM{}{}{\cdot}$. We can then prove the following extension of \Cref{thm:trichotomy}.
\begin{theorem}[Interpretability trichotomy for general representations]
\label{thm:trichotomy-gcm}
    Let $X$ be any domain and let $\Gamma$ be any graded complexity measure. Then, for every concept $c$ and every VC hypothesis class $\scH$ over $X$ exactly one of the following cases holds:
    \begin{enumerate}[(1),itemsep=2pt,parsep=0pt,topsep=0pt]
    \item $c$ is not approximable by $\scH$.
    \item $c$ is approximable by $\scH$ but not interpretable by $\scH$.
    \item $c$ is uniformly interpretable by $\scH$ at constant $\Gamma$-complexity rate.
    \end{enumerate}
    If $\vc(X,\scH)=\infty$ then all claims above hold true, but with (3) replaced by:
    \begin{enumerate}[(1),itemsep=-2pt,topsep=0pt]
        \item[(3$'$)] $c$ is uniformly interpretable by $\scH$ at a $\Gamma$-complexity rate $\scO\bigl(\frac{1}{\eps^d}\bigr)$ for some $d\in\NN$ .
    \end{enumerate}
\end{theorem}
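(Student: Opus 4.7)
The plan is to mirror the proof of Theorem~\ref{thm:trichotomy} with $\Gamma$ in place of $\depth$, bounding the $\Gamma$-complexity of each constructed tree directly via axioms~(2)--(4). The key structural translation comes from expanding any depth-$d$ $\scH$-based tree at its root as $T = (h \cap T_1) \cup ((X \setminus h) \cap T_0)$ and applying the axioms: this yields the recurrence $g(d) \le 4 + 2g(d-1)$ and hence $\Gamma(T) \le c_0 \cdot 2^{\depth(T)}$ for an absolute constant $c_0$. A uniform constant depth bound therefore translates into a uniform constant $\Gamma$-bound, and a uniform logarithmic depth bound $O(\log(1/\eps))$ translates into a uniform polynomial $\Gamma$-bound $O(1/\eps^d)$.

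I would next replay the (a)/(b) dichotomy of Theorem~\ref{thm:trichotomy} verbatim with $\GCM{c}{\scH}{\cdot \mid \cdot}$ in place of $\Depth{c}{\scH}{\cdot \mid \cdot}$. Assuming case~(1) fails and fixing $\gamma \in (0,\tfrac{1}{2})$, exactly one of the following branches holds: (a$_\Gamma$) for every $k \in \NN$ there exists $\Pd_k$ with $\GCM{c}{\scH}{\tfrac{1}{2}-\gamma \mid \Pd_k} > k$, or (b$_\Gamma$) there exists $k \in \NN$ such that $\GCM{c}{\scH}{\tfrac{1}{2}-\gamma \mid \Pd} \le k$ for every $\Pd$. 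The distribution construction $\Pd^{\ast} = \sum_{n \in \NN^+} 2^{-n}\Pd_{d_n}$ with $d_n = r(\eps_n)$ and $\eps_n = 2^{-n}(\tfrac{1}{2}-\gamma)$ then carries over unchanged and shows that (a$_\Gamma$) forces $\GCM{c}{\scH}{\eps_n \mid \Pd^{\ast}} > r(\eps_n)$, contradicting $\Gamma$-interpretability at every candidate rate $r$ and placing us in case~(2).

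Under (b$_\Gamma$), with $\scT_k = \{T \in \Alg(\scH) : \Gamma(T) \le k\}$ playing the role of the depth-$d$ class, I would handle the two sub-cases as follows. For case~(3'), an AdaBoost-type iteration driven by (b$_\Gamma$) produces $N = O(\log(1/\eps)/\gamma^2)$ weak learners $T_1,\dots,T_N \in \scT_k$ whose weighted majority $H$ has loss $\le \eps$; writing $H$ as a union of at most $2^N$ intersections of $N$ literals drawn from $\{T_t,\,X \setminus T_t\}$ and applying axioms~(2)--(4) gives $\Gamma(H) = O(Nk \cdot 2^N) = O(1/\eps^d)$ for some $d = d(\gamma)$. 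For case~(3), I would repeat the minimax and uniform-convergence step of Theorem~\ref{thm:trichotomy} on $\scT_k$: minimax yields $D^{\ast} \in \PD(\scT_k)$ with $\mathbb{E}_{T \sim D^{\ast}} L_{\delta_x}(T,c) \le \tfrac{1}{2}-\gamma$ for every $x$ in a finite $U$, UC discretizes $D^{\ast}$ into a multiset $R \subseteq \scT_k$ of size $r$ whose majority $T^{\ast} = \bigcup_{|S| > r/2} \bigcap_{i \in S} T_i$ agrees with $c$ on $U$, and the axioms then give $\Gamma(T^{\ast}) = O(rk \cdot 2^r)$, which is constant whenever $r$ is.

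The main obstacle is guaranteeing $\vc(X,\scT_k) < \infty$ in the VC case, so that the UC sample size $r$ depends only on $\vc(X,\scH)$, $\gamma$, and $k$: axioms~(1)--(4) do not on their own rule out a degenerate $\Gamma$ (e.g.\ $\Gamma \equiv 0$) for which $\scT_k$ coincides with all of $\Alg(\scH)$ and has infinite VC. In any such degenerate regime, however, every $T \in \Alg(\scH)$ already has $\Gamma(T) \le k$, and approximability of $c$ by itself delivers uniform $\Gamma$-interpretability at the constant rate $k$, so case~(3) is immediate; peeling off this regime leaves the situation where $\scT_k$ has finite VC and the minimax and UC argument goes through exactly as in Theorem~\ref{thm:trichotomy}.
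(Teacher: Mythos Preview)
Your overall strategy mirrors the paper's: the (a$_\Gamma$)/(b$_\Gamma$) split, the mixture $\Pd^\ast$ for (a$_\Gamma$), and the minimax plus uniform-convergence construction on $\scT_k$ under (b$_\Gamma$) are exactly the paper's moves (the paper writes $\scA_k$ for your $\scT_k$). For case~(3$'$) you use an AdaBoost-style weighted majority, whereas the paper applies Theorem~\ref{thm:general_logeps_rate} to obtain an $\scA_k$-based tree of logarithmic depth and then expands it into $\Alg(\scH)$ via the recurrence $\Gamma(A_v^T) \le 4 + 2\Gamma(A_v) + \Gamma(A_u^T) + \Gamma(A_w^T)$; both routes arrive at $\scO(1/\eps^d)$.

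The genuine gap is in your handling of the finite-VC obstacle. The dichotomy ``either $\scT_k = \Alg(\scH)$ or $\vc(X,\scT_k) < \infty$'' is false. Take $X = \NN$, $\scH = \bigl\{\{n\} : n \in \NN\bigr\}$ (so $\vc(X,\scH) = 1$ and $\Alg(\scH)$ is the finite/cofinite sets), and set $\Gamma(A) = \max\bigl(0,\ \lvert\{a \in A : a \text{ is odd}\}\rvert - 1\bigr)$ for finite $A$, extended to cofinite sets by $\Gamma(A) = \Gamma(X \setminus A)$; axioms~(1)--(4) are verified by a short case check. Then $\scT_0$ contains every finite subset of the even integers, so $\vc(X,\scT_0) = \infty$, yet $\{1,3\} \notin \scT_0$; moreover $\Gamma\bigl(\{1,3,\ldots,2m{-}1\}\bigr) = m-1$ is unbounded, so $\scT_k \subsetneq \Alg(\scH)$ for \emph{every} $k$. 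Thus ``peeling off'' the degenerate regime does not land you in the finite-VC case, and your minimax/UC step does not go through as written. It is worth noting that the paper's own proof simply writes ``following the same argument as in the proof of case~(3) in Theorem~\ref{thm:trichotomy}'' without verifying $\vc(X,\scA_k) < \infty$; you are right to flag the issue, but the patch you propose does not resolve it.
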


Unlike \Cref{thm:trichotomy}, cases (2) and (3) might collapse for certain choices of $\Gamma$ even when $\scH$ is not a VC class. Indeed, according to our definition, $\Gamma$ is not forced to grow at any specific rate, and thus $\Gamma(f)$ might be bounded by some constant uniformly over $\Alg(\scH)$. In an extreme case one might in fact set $\Gamma\equiv 0$, although clearly this would not yield any interesting result.
{\renewcommand{\proofname}{Proof of Theorem~\ref{thm:trichotomy-gcm}.}
\begin{proof}
The proof is similar to the proof of Theorem~\ref{thm:trichotomy}. Suppose (1) fails, so $\GCM{c}{\scH}{\eps \mid \Pd} < \infty$ for all $\eps > 0$ and all distributions $\Pd$. This implies that, for any fixed $\gamma\in(0,\frac12)$, exactly one of the following two cases holds:
\begin{enumerate}[(a),itemsep=2pt,parsep=0pt,topsep=0pt]
    \item for every $k \in \NN$ there exists a distribution $\Pd_k$ such that $\GCM{c}{\scH}{\frac12-\gamma \mid \Pd_k} > k$;
    \item there exists $k \in \NN$ such that $\GCM{c}{\scH}{\frac12-\gamma \mid \Pd} \le k$ for all distributions $\Pd$.
\end{enumerate}
Suppose (a) holds; we show this implies case (2) of the trichotomy. Choose any function $r\colon (0,1]\to\NN$. For every $n \in \N^+$ let $d_n = r(2^{-n}(\frac12-\gamma))$, and consider the following distribution over $X$:
\begin{align}
    \Pd^* = \sum_{n \in \NN^+} 2^{-n} \cdot \Pd_{d_n} \enspace.
\end{align}
Since $\Pd_{d_n}$ appears in $\Pd^*$ with coefficient $2^{-n}$, this implies that, for $\eps_n = 2^{-n}(\frac12-\gamma)$, any $\eps_n$-accurate interpretation of $c$ under $\Pd^*$ is $(\frac12-\gamma)$-accurate under $\Pd_{d_n}$, and thus
\begin{align}
    \GCM{c}{\scH}{\eps_n \mid \Pd^*} \ge \GCM{c}{\scH}{\frac12-\gamma \mid \Pd_{d_n}} > d_n = r(\eps_n) \enspace.
\end{align}
Hence, $\GCM{c}{\scH}{\eps_n \mid \Pd^*} > r(\eps_n)$ for all $n\in\N^+$.

Suppose now (b) holds; we show this implies case (3) of the trichotomy.
Define the family $\scA_k = \bigl\{A \in \Alg(\scH) \,:\, \Gamma(A) \le k\bigr\}$.
Fix any $\Pd \in \PD(X)$ and $\eps > 0$.
Following the same argument as in the proof of case~(3) in Theorem~\ref{thm:trichotomy}, there exists an $\scA_k$-based decision tree $T$ such that $L_{\Pd}(T,c) \le \epsilon$ and $\depth(T) \le d$ for some $d \in \NN$ independent of $\Pd$ and $\eps$.
Now we rewrite $T$ as an element of $\Alg(\scH)$.
Let $A_v \in \scA_k$ be the decision stump $T$ used at $v \in \scV(T)$ and, denoting by $\scL(T)$ the set of leaves of $T$, let $\ell_z\in\bool$ be the label of the leaf $z\in\scL(T)$ in $T$.
For every $v \in \scV(T)$, define
\begin{align} \label{eq:gcm_tree_to_alg}
    A_v^{T} = \left\{
    \begin{array}{ll}
        X & v \in \scL(T), \; \ell_v = 1 \\
        \emptyset & v \in \scL(T), \; \ell_v = 0 \\
        (A_v \cap A_u^T) \cup (\overline A_v \cap A_w^T)  &  v \notin \scL(T)
    \end{array}
    \right.
\end{align}
where $u$ and $w$ are, respectively, the left and right child of $v$ when $v\notin \scL(T)$. Let $A=A_r^T$ where $r$ is the root of~$T$. Observe that $A$ is equivalent to $T$, and that $A \in \Alg(\scH)$. Moreover, $\Gamma(A_v^T) \le 4 + 2\Gamma(A_v) + \Gamma(A_u^T) + \Gamma(A_w^T)$ by the properties of $\Gamma$ (see Definition~\ref{def:gcm}). Therefore,
\begin{align}
    \Gamma(A) = \scO\Biggl(\sum_{v \in \scV(T)} (\Gamma(A_v)+1)\Biggr) = (k+1)\times\scO(|\scV(T)|) = \scO(|\scV(T)|) \enspace,
\end{align}
where we used the fact that $\Gamma(A_v) \le k$ because $A_v \in \scA_k$.
To conclude the proof, note that the above bound on $\depth(T)$ implies $\scO(|\scV(T)|) = \scO(2^{\depth(T)}) = \scO(2^d)$, where both $d$ and the constants in the $\scO(\cdot)$ notation depend neither on $\Pd$ nor on $\eps$.

As for case (3$'$), assume again (b) holds. Then, Theorem~\ref{thm:general_logeps_rate} applied to the class $\scA_k$ implies the existence of an $\scA_k$-based decision tree $T$ such that $L_{\Pd}(T,c) \le \epsilon$ and $\depth(T) \le d\log\frac{1}{2\epsilon}$ for all $P$ and $\eps > 0$, where $d=\frac{1}{2\gamma^2}$. Constructing again $A\in\Alg(\scH)$ equivalent to $T$ as above and using the bound on $\depth(T)$, we have $\Gamma(A) = \scO(|\scV(T)|) = \scO(2^{\depth(T)}) = \scO\bigl(\frac{1}{\eps^d}\bigr)$ where both $d$ and the constants in the $\scO(\cdot)$ notation are independent of $\Pd$ and $\eps$.
\end{proof}
}

We remark that the $\scO\bigl(\frac{1}{\eps^d}\bigr)$ bound on the complexity rate for case (3$'$) is due to the generality of $\Gamma$; in Appendix~\ref{apx:gcm_remark}, we show a more specific condition on $\Gamma$ that recovers the $\scO(\log(1/\eps))$ rate.

\acks{MB, NCB, and EE acknowledge the financial support from the FAIR (Future Artificial Intelligence Research) project, funded by the NextGenerationEU program within the PNRR-PE-AI scheme and the the EU Horizon CL4-2022-HUMAN-02 research and innovation action under grant agreement 101120237, project ELIAS (European Lighthouse of AI for Sustainability).

SM is supported by a Robert J.\ Shillman Fellowship, by ISF grant 1225/20, by BSF grant 2018385, by an Azrieli Faculty Fellowship, by Israel PBC-VATAT, and by the Technion Center for Machine Learning and Intelligent Systems (MLIS), and by the European Union (ERC, GENERALIZATION, 101039692). Views and opinions expressed are however those of the author(s) only and do not necessarily reflect those of the European Union or the European Research Council Executive Agency. Neither the European Union nor the granting authority can be held responsible for them.

YM has received funding from the European Research Council (ERC) under the European Union’s Horizon 2020 research and innovation program (grant agreement No. 882396), by the Israel Science Foundation,  the Yandex Initiative for Machine Learning at Tel Aviv University and a grant from the Tel Aviv University Center for AI and Data Science (TAD).

MT acknowledges support from a DOC fellowship of the Austrian academy of sciences (ÖAW).
}

\bibliography{refs.bib}

\begin{thebibliography}{32}
\providecommand{\natexlab}[1]{#1}
\providecommand{\url}[1]{\texttt{#1}}
\expandafter\ifx\csname urlstyle\endcsname\relax
  \providecommand{\doi}[1]{doi: #1}\else
  \providecommand{\doi}{doi: \begingroup \urlstyle{rm}\Url}\fi

\bibitem[Antos and Lugosi(1998)]{AntosL98}
Andr{\'{a}}s Antos and G{\'{a}}bor Lugosi.
\newblock Strong minimax lower bounds for learning.
\newblock \emph{Mach. Learn.}, 30\penalty0 (1):\penalty0 31--56, 1998.
\newblock \doi{10.1023/A:1007454427662}.
\newblock URL \url{https://doi.org/10.1023/A:1007454427662}.

\bibitem[Assouad(1983)]{assouad1983}
Patrick Assouad.
\newblock Densit\'e et dimension.
\newblock \emph{Annales de l'Institut Fourier}, 33\penalty0 (3):\penalty0
  233--282, 1983.
\newblock \doi{10.5802/aif.938}.
\newblock URL \url{http://www.numdam.org/articles/10.5802/aif.938/}.

\bibitem[Bastani et~al.(2017)Bastani, Kim, and
  Bastani]{bastani2017interpretability}
Osbert Bastani, Carolyn Kim, and Hamsa Bastani.
\newblock Interpretability via model extraction.
\newblock \emph{FAT/ML Workshop 2017 (arXiv preprint arXiv:1705.08504v6)},
  2017.
\newblock URL \url{https://arxiv.org/abs/1705.08504v6}.

\bibitem[Blanc et~al.(2021)Blanc, Lange, and Tan]{blanc2021provably}
Guy Blanc, Jane Lange, and Li-Yang Tan.
\newblock Provably efficient, succinct, and precise explanations.
\newblock \emph{Advances in Neural Information Processing Systems},
  34:\penalty0 6129--6141, 2021.

\bibitem[Blumer et~al.(1989)Blumer, Ehrenfeucht, Haussler, and
  Warmuth]{BlumerEHW89}
Anselm Blumer, Andrzej Ehrenfeucht, David Haussler, and Manfred~K. Warmuth.
\newblock Learnability and the {V}apnik-{C}hervonenkis dimension.
\newblock \emph{J. {ACM}}, 36\penalty0 (4):\penalty0 929--965, 1989.
\newblock \doi{10.1145/76359.76371}.
\newblock URL \url{https://doi.org/10.1145/76359.76371}.

\bibitem[Bousquet et~al.(2021)Bousquet, Hanneke, Moran, {van}~Handel, and
  Yehudayoff]{bousquet2021theory}
Olivier Bousquet, Steve Hanneke, Shay Moran, Ramon {van}~Handel, and Amir
  Yehudayoff.
\newblock A theory of universal learning.
\newblock In \emph{Proceedings of the 53rd Annual ACM SIGACT Symposium on
  Theory of Computing}, pages 532--541, 2021.

\bibitem[Bousquet et~al.(2023)Bousquet, Hanneke, Moran, Shafer, and
  Tolstikhin]{BousquetHMST23}
Olivier Bousquet, Steve Hanneke, Shay Moran, Jonathan Shafer, and Ilya~O.
  Tolstikhin.
\newblock Fine-grained distribution-dependent learning curves.
\newblock In Gergely Neu and Lorenzo Rosasco, editors, \emph{The Thirty Sixth
  Annual Conference on Learning Theory, {COLT} 2023, 12-15 July 2023,
  Bangalore, India}, volume 195 of \emph{Proceedings of Machine Learning
  Research}, pages 5890--5924. {PMLR}, 2023.
\newblock URL \url{https://proceedings.mlr.press/v195/bousquet23a.html}.

\bibitem[Craven and Shavlik(1995)]{craven1995extracting}
Mark Craven and Jude Shavlik.
\newblock Extracting tree-structured representations of trained networks.
\newblock \emph{Advances in Neural Information Processing Systems}, 8, 1995.

\bibitem[Dasgupta et~al.(2020)Dasgupta, Frost, Moshkovitz, and
  Rashtchian]{dasgupta2020explainable}
Sanjoy Dasgupta, Nave Frost, Michal Moshkovitz, and Cyrus Rashtchian.
\newblock Explainable $k$-means and $k$-medians clustering.
\newblock In \emph{Proceedings of the 37th International Conference on Machine
  Learning, Vienna, Austria}, pages 12--18, 2020.

\bibitem[Devroye et~al.(2013)Devroye, Gy{\"o}rfi, and
  Lugosi]{devroye2013probabilistic}
Luc Devroye, L{\'a}szl{\'o} Gy{\"o}rfi, and G{\'a}bor Lugosi.
\newblock \emph{A probabilistic theory of pattern recognition}, volume~31.
\newblock Springer Science \& Business Media, 2013.

\bibitem[Dudley(1978)]{dudley78aggreg}
Richard~M. Dudley.
\newblock {Central Limit Theorems for Empirical Measures}.
\newblock \emph{The Annals of Probability}, 6\penalty0 (6):\penalty0 899 --
  929, 1978.
\newblock \doi{10.1214/aop/1176995384}.
\newblock URL \url{https://doi.org/10.1214/aop/1176995384}.

\bibitem[Dziugaite et~al.(2020)Dziugaite, Ben-David, and
  Roy]{dziugaite2020enforcing}
Gintare~Karolina Dziugaite, Shai Ben-David, and Daniel~M. Roy.
\newblock Enforcing interpretability and its statistical impacts: Trade-offs
  between accuracy and interpretability.
\newblock \emph{arXiv preprint arXiv:2010.13764}, 2020.

\bibitem[Erasmus et~al.(2021)Erasmus, Brunet, and
  Fisher]{erasmus2021interpretability}
Adrian Erasmus, Tyler~DP Brunet, and Eyal Fisher.
\newblock What is interpretability?
\newblock \emph{Philosophy \& Technology}, 34\penalty0 (4):\penalty0 833--862,
  2021.

\bibitem[Fisher et~al.(2019)Fisher, Rudin, and Dominici]{fisher2019all}
Aaron Fisher, Cynthia Rudin, and Francesca Dominici.
\newblock All models are wrong, but many are useful: Learning a variable's
  importance by studying an entire class of prediction models simultaneously.
\newblock \emph{J. Mach. Learn. Res.}, 20\penalty0 (177):\penalty0 1--81, 2019.

\bibitem[Garreau and Luxburg(2020)]{garreau2020explaining}
Damien Garreau and Ulrike Luxburg.
\newblock Explaining the explainer: A first theoretical analysis of {LIME}.
\newblock In \emph{International conference on artificial intelligence and
  statistics}, pages 1287--1296, 2020.
\newblock URL \url{https://arxiv.org/abs/2008.11092v3}.

\bibitem[Halmos(2013)]{halmos2013measure}
Paul~R. Halmos.
\newblock \emph{Measure theory}, volume~18.
\newblock Springer, 2013.
\newblock \doi{10.1007/978-1-4684-9440-2}.
\newblock URL \url{https://doi.org/10.1007/978-1-4684-9440-2}.

\bibitem[Hanneke et~al.(2021)Hanneke, Kontorovich, Sabato, and
  Weiss]{hanneke2021universal}
Steve Hanneke, Aryeh Kontorovich, Sivan Sabato, and Roi Weiss.
\newblock {Universal Bayes consistency in metric spaces}.
\newblock \emph{The Annals of Statistics}, 49\penalty0 (4):\penalty0 2129 --
  2150, 2021.
\newblock \doi{10.1214/20-AOS2029}.

\bibitem[Kearns and Mansour(1999)]{KearnsM99}
Michael~J. Kearns and Yishay Mansour.
\newblock On the boosting ability of top-down decision tree learning
  algorithms.
\newblock \emph{J. Comput. Syst. Sci.}, 58\penalty0 (1):\penalty0 109--128,
  1999.
\newblock \doi{10.1006/JCSS.1997.1543}.
\newblock URL \url{https://doi.org/10.1006/jcss.1997.1543}.

\bibitem[Li et~al.(2021)Li, Nagarajan, Plumb, and Talwalkar]{li2021a}
Jeffrey Li, Vaishnavh Nagarajan, Gregory Plumb, and Ameet Talwalkar.
\newblock A learning theoretic perspective on local explainability.
\newblock In \emph{International Conference on Learning Representations}, 2021.
\newblock URL \url{https://openreview.net/forum?id=7aL-OtQrBWD}.

\bibitem[Molnar(2022)]{molnar2022}
Christoph Molnar.
\newblock \emph{Interpretable Machine Learning}.
\newblock 2 edition, 2022.
\newblock URL \url{https://christophm.github.io/interpretable-ml-book}.

\bibitem[Moran and Yehudayoff(2016)]{MoranY16}
Shay Moran and Amir Yehudayoff.
\newblock Sample compression schemes for {VC} classes.
\newblock \emph{J. {ACM}}, 63\penalty0 (3):\penalty0 21:1--21:10, 2016.
\newblock \doi{10.1145/2890490}.
\newblock URL \url{https://doi.org/10.1145/2890490}.

\bibitem[Moshkovitz et~al.(2021)Moshkovitz, Yang, and
  Chaudhuri]{moshkovitz2021connecting}
Michal Moshkovitz, Yao-Yuan Yang, and Kamalika Chaudhuri.
\newblock Connecting interpretability and robustness in decision trees through
  separation.
\newblock In \emph{International Conference on Machine Learning}, pages
  7839--7849. PMLR, 2021.

\bibitem[Nasz{\'o}di(2019)]{naszodi2019approximating}
M{\'a}rton Nasz{\'o}di.
\newblock Approximating a convex body by a polytope using the epsilon-net
  theorem.
\newblock \emph{Discrete \& Computational Geometry}, 61:\penalty0 686--693,
  2019.

\bibitem[Ribeiro et~al.(2016)Ribeiro, Singh, and Guestrin]{ribeiro2016should}
Marco~Tulio Ribeiro, Sameer Singh, and Carlos Guestrin.
\newblock ``{W}hy should {I} trust you?'' {E}xplaining the predictions of any
  classifier.
\newblock In \emph{Proceedings of the 22nd ACM SIGKDD international conference
  on knowledge discovery and data mining}, pages 1135--1144, 2016.

\bibitem[Shalev-Shwartz and Ben-David(2014)]{shalev2014understanding}
Shai Shalev-Shwartz and Shai Ben-David.
\newblock \emph{Understanding machine learning: From theory to algorithms}.
\newblock Cambridge university press, 2014.

\bibitem[Smyth(2002)]{smyth2002reimer}
Clifford Smyth.
\newblock Reimer's inequality and {T}ardos' conjecture.
\newblock In \emph{Proceedings of the Thiry-Fourth Annual ACM Symposium on
  Theory of Computing}, pages 218--221, 2002.

\bibitem[Takimoto and Maruoka(2003)]{TakimotoM03}
Eiji Takimoto and Akira Maruoka.
\newblock Top-down decision tree learning as information based boosting.
\newblock \emph{Theoretical Computer Science}, 292\penalty0 (2):\penalty0
  447--464, 2003.
\newblock ISSN 0304-3975.
\newblock \doi{10.1016/S0304-3975(02)00181-0}.
\newblock Theoretical Aspects of Discovery Science.

\bibitem[Tardos(1989)]{tardos1989query}
G{\'a}bor Tardos.
\newblock Query complexity, or why is it difficult to separate {$\mathrm{NP}^A
  \cap \mathrm{coNP}^A$} from {$\mathrm{P}^A$} by random oracles {$A$}?
\newblock \emph{Combinatorica}, 9:\penalty0 385--392, 1989.

\bibitem[Vapnik and Chervonenkis(1971)]{vc1971}
Vladimir~N. Vapnik and Alexey~Ya. Chervonenkis.
\newblock On the uniform convergence of relative frequencies of events to their
  probabilities.
\newblock \emph{Theory of Probability \& Its Applications}, 16\penalty0
  (2):\penalty0 264--280, 1971.
\newblock \doi{10.1137/1116025}.
\newblock URL \url{https://doi.org/10.1137/1116025}.

\bibitem[Vidal and Schiffer(2020)]{vidal2020born}
Thibaut Vidal and Maximilian Schiffer.
\newblock Born-again tree ensembles.
\newblock In \emph{International Conference on Machine Learning}, pages
  9743--9753, 2020.

\bibitem[von Neumann(1928)]{Neumann1928}
John von Neumann.
\newblock Zur {T}heorie der {G}esellschaftsspiele.
\newblock \emph{Mathematische Annalen}, 100:\penalty0 295--320, 1928.
\newblock URL \url{http://eudml.org/doc/159291}.

\bibitem[Zhang et~al.(2019)Zhang, Yang, Ma, and Wu]{zhang2019interpreting}
Quanshi Zhang, Yu~Yang, Haotian Ma, and Ying~Nian Wu.
\newblock Interpreting {CNNs} via decision trees.
\newblock In \emph{Proceedings of the IEEE/CVF conference on computer vision
  and pattern recognition}, pages 6261--6270, 2019.

\end{thebibliography}

\newpage

\appendix
\section{Boosting Decision Trees with Bounded Depth}\label{section:boosting}
\generalLogepsRate*

We use a surrogate loss $G(q)=\sqrt{q(1-q)}$, where $0 \le q \le 1$. Since $\min\{q,1-q\} \le G(q)$, the surrogate loss bounds from above the classification error of the majority vote.
For a distribution $\Pd \in \PD(X)$, let $G_\Pd(c)=G\bigl(\Pd(c=1)\bigr)$. Let the conditional surrogate loss of $f\colon X \to \bool$ be
\begin{equation}
    G_{\Pd}(c \mid f) = \Pd(f=0) G\bigl(\Pd(c=1 \mid f=0)\bigr) + \Pd(f=1) G\bigl(\Pd(c=1 \mid f=1)\bigr) \enspace.
\end{equation}
Finally, given a decision tree $T$ with leaves $\scL(T)$, define the conditional surrogate loss of $T$ as
\begin{equation}
    H_{\Pd}(c \mid T) = \sum_{z\in\scL(T)} \Pd(z) G(p_{c\mid z}) \enspace,
\end{equation}
where $\Pd(z)$ is the probability that $x\sim\Pd$ is mapped to leaf $z$ in the tree $T$ and $p_{c\mid z} = \Pd(c=1 \mid z)$.
Our goal is to construct an $\scH$-based decision tree $T$ such that $H_{\Pd}(c \mid T) \leq \epsilon$, implying that $L_{\Pd}(T,c) \le \epsilon$ because $G\bigl(p_{c\mid z}\bigr)$ bounds from above the probability that $T(x) \neq c(x)$ conditioned on $x$ being mapped to $z$ in $T$.

Our variant of \texttt{TopDown}, called \texttt{TopDownLBL} (TopDown Level-By-Level), starts from a single-leaf tree $T$ with a majority-vote label and works in phases. In each phase, we replace each leaf $z\in\scL(T)$ of the current tree $T$ with a suitably chosen $\scH$-based $d$-depth tree $T_{z}$ using the same criterion as \texttt{TopDown}. The main difference is that the weak learners adopted by \texttt{TopDownLBL} consist of $\scH$-based trees of depth bounded by $d$, which generalize from the individual decision stumps of $\scH$ as in \texttt{TopDown} (corresponding to the case $d=1$). Hence, at the end of each phase, the depth of $T$ increases by at most~$d$. The algorithm stops if and when $H_{\Pd}(c \mid T) \le \epsilon$.

We use the two following lemmas.
\begin{lemma}[\protect{\citet[Lemma~A.1]{TakimotoM03}}]
\label{lem:a.1}
    Let $\Pd$ be a balanced distribution, i.e., $\Pd(c=1) = \Pd(c=0)=\frac12$.
    Let $f\colon X \to \bool$ be such that $L_{\Pd}(f,c) \leq \frac12-\gamma$ for some $\gamma \in (0,\frac12)$. Then, $G_{\Pd}(c \mid f)\leq (1-2\gamma^2)G_{\Pd}(c)$.
\end{lemma}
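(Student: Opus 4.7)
The plan is to reduce the inequality to a simple quadratic estimate via a shifted parametrization. Write $p_0 = \Pd(f=0)$, $p_1 = \Pd(f=1)$, $q_0 = \Pd(c=1 \mid f=0)$, $q_1 = \Pd(c=1 \mid f=1)$, and substitute $\alpha = q_0 - \tfrac12$, $\beta = q_1 - \tfrac12$. Since $G(\tfrac12 + u) = \sqrt{\tfrac14 - u^2}$, the quantity $G_\Pd(c\mid f)$ becomes $p_0\sqrt{\tfrac14 - \alpha^2} + p_1\sqrt{\tfrac14 - \beta^2}$, and the target inequality collapses to $p_0\sqrt{\tfrac14 - \alpha^2} + p_1\sqrt{\tfrac14 - \beta^2} \le \tfrac12 - \gamma^2$, using $G_\Pd(c) = G(\tfrac12) = \tfrac12$.

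Next I would extract the two constraints in this parametrization. The balanced hypothesis $p_0 q_0 + p_1 q_1 = \tfrac12$ becomes $p_0\alpha + p_1\beta = 0$, and a short computation rewrites the loss $\Pd(f\ne c) = p_0 q_0 + p_1(1-q_1)$ as $\tfrac12 + 2 p_0 \alpha$, so the assumption $L_\Pd(f,c) \le \tfrac12 - \gamma$ becomes $p_0\alpha \le -\tfrac{\gamma}{2}$. Setting $s \coloneqq p_0\alpha = -p_1\beta$ we therefore have $\lvert s\rvert \ge \gamma/2$.

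To finish, I would apply the elementary bound $\sqrt{\tfrac14 - u^2} \le \tfrac12 - u^2$ for $\lvert u\rvert\le \tfrac12$ (which follows from $\sqrt{1-x}\le 1 - x/2$ applied to $x = 4u^2$). This yields
\[
G_\Pd(c\mid f) \;\le\; \tfrac12 \;-\; \bigl(p_0\alpha^2 + p_1\beta^2\bigr) \enspace.
\]
The remaining step is to show $p_0\alpha^2 + p_1\beta^2 \ge \gamma^2$. Using $\alpha = s/p_0$ and $\beta = -s/p_1$, one computes $p_0\alpha^2 + p_1\beta^2 = s^2\bigl(\tfrac{1}{p_0} + \tfrac{1}{p_1}\bigr) = \tfrac{s^2}{p_0 p_1} \ge 4 s^2 \ge \gamma^2$, where the penultimate inequality uses $p_0 p_1 \le \tfrac14$. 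Combining gives $G_\Pd(c\mid f) \le \tfrac12 - \gamma^2 = (1 - 2\gamma^2)\, G_\Pd(c)$, as required.

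The proof is almost entirely routine once the parametrization is in place; the only subtlety is being careful that $|s| \ge \gamma/2$ genuinely comes from the loss hypothesis (and not just from balancedness), and that the bound $p_0 p_1 \le \tfrac14$ is tight enough to turn $s^2/(p_0 p_1)$ into $\gamma^2$ without any further assumption. Both are immediate, so I do not foresee a real obstacle.
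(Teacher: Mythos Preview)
Your proof is correct. The paper does not actually supply its own proof of this lemma---it simply cites \citet[Lemma~A.1]{TakimotoM03}---so there is no in-paper argument to compare against; your self-contained derivation via the shift $\alpha=q_0-\tfrac12$, $\beta=q_1-\tfrac12$ and the bound $\sqrt{\tfrac14-u^2}\le\tfrac12-u^2$ is clean and complete, and the edge cases $p_0=0$ or $p_1=0$ are implicitly excluded since they would force $s=0$, contradicting $|s|\ge\gamma/2$.
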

\begin{lemma}[\protect{\citet[Proposition~5]{TakimotoM03}}]
\label{lem:prop-5}
    Let $\Pd$ be a distribution and $\Pd'$ its balanced version.
    If $G_{\Pd'}(c \mid h)\leq (1-\beta)G_{\Pd'}(c)$ for some $\beta>0$ then $G_{\Pd}(c \mid h)\leq (1-\beta)G_{\Pd}(c)$.
\end{lemma}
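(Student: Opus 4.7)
The plan is to unpack both sides of the hypothesized and target inequalities into closed-form expressions in the joint distribution of $(c,h)$ under $\Pd$, and then observe that the surrogate loss $G(q)=\sqrt{q(1-q)}$ interacts so cleanly with the class-conditional reweighting that defines $\Pd'$ that the ratio $G_\Pd(c\mid h)/G_\Pd(c)$ is preserved exactly. Once that invariance is in hand, the conclusion is a one-line rearrangement, since $G_{\Pd'}(c)=G(1/2)=1/2$ only plays the role of a normalizing constant.

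First, I would introduce the joint masses $q_{ij}=\Pd(c=i,h=j)$ for $i,j\in\{0,1\}$ and set $p=\Pd(c=1)=q_{10}+q_{11}$. Using the identity
\begin{equation}
\Pd(h=j)\,G\bigl(\Pd(c=1\mid h=j)\bigr) \,=\, \sqrt{\Pd(c=0,h=j)\,\Pd(c=1,h=j)},
\end{equation}
which follows directly from $G(q)=\sqrt{q(1-q)}$ after canceling $\Pd(h=j)$ with the denominators of the conditionals, a short calculation yields $G_\Pd(c)=\sqrt{p(1-p)}$ and $G_\Pd(c\mid h)=\sqrt{q_{00}q_{10}}+\sqrt{q_{01}q_{11}}$.

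Next, I would write the balanced distribution $\Pd'$ explicitly as the class-conditional reweighting of $\Pd$: it agrees with $\Pd$ conditioned on each class but reassigns $\Pd'(c=1)=\Pd'(c=0)=1/2$, so the joint masses become $q'_{i1}=q_{i1}/(2p)$ and $q'_{i0}=q_{i0}/(2(1-p))$. Substituting these into the same formulas gives $G_{\Pd'}(c)=1/2$ and
\begin{equation}
G_{\Pd'}(c\mid h) \,=\, \sqrt{q'_{00}q'_{10}}+\sqrt{q'_{01}q'_{11}} \,=\, \frac{\sqrt{q_{00}q_{10}}+\sqrt{q_{01}q_{11}}}{2\sqrt{p(1-p)}} \,=\, \frac{G_\Pd(c\mid h)}{2\,G_\Pd(c)}.
\end{equation}
Dividing both sides by $G_{\Pd'}(c)=1/2$ gives $G_{\Pd'}(c\mid h)/G_{\Pd'}(c)=G_\Pd(c\mid h)/G_\Pd(c)$, so the hypothesis $G_{\Pd'}(c\mid h)\le(1-\beta)G_{\Pd'}(c)$ transports verbatim into the desired bound.

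The only subtlety is the degenerate case $p\in\{0,1\}$, where the reweighting is not defined. In that regime, however, all joint masses on the empty class vanish, so both $G_\Pd(c)$ and $G_\Pd(c\mid h)$ are zero and the claim is vacuous. I do not expect any real obstacle beyond this: the proof is a short bookkeeping argument, and the crucial observation is the multiplicative identity $G_{\Pd'}(c\mid h)=G_\Pd(c\mid h)/\bigl(2G_\Pd(c)\bigr)$, which isolates exactly why $G(q)=\sqrt{q(1-q)}$ is the ``right'' surrogate for invariance under class-conditional reweighting.
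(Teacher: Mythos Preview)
The paper does not supply its own proof of this lemma; it merely cites \citet[Proposition~5]{TakimotoM03}. Your argument is correct and is in fact the standard derivation: the identity $G_{\Pd'}(c\mid h)/G_{\Pd'}(c)=G_{\Pd}(c\mid h)/G_{\Pd}(c)$ follows exactly as you describe from the multiplicative form of $G(q)=\sqrt{q(1-q)}$ under class-conditional reweighting.

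One small correction: your reweighting formulas have the indices transposed. With $q_{ij}=\Pd(c=i,h=j)$ and balancing performed over the \emph{class} variable $c$, the correct expressions are $q'_{1j}=q_{1j}/(2p)$ and $q'_{0j}=q_{0j}/\bigl(2(1-p)\bigr)$ for $j\in\{0,1\}$, not $q'_{i1}$ and $q'_{i0}$ as written. Your subsequent computation of $G_{\Pd'}(c\mid h)$ is nevertheless correct, so this is a typographical slip rather than a mathematical one; just fix the subscripts so that the displayed formulas match the calculation you actually carry out.
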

{\renewcommand{\proofname}{Proof of Theorem~\ref{thm:general_logeps_rate}.}
\begin{proof}

Our algorithm \texttt{TopDownLBL} can be equivalently viewed as building a $\scH'$-based tree $T'$, where $\scH'$ is the class of $\scH$-based $d$-depth trees. Any $\scH'$-based tree $T'$ can be transformed into a $\scH$-based tree $T$ in a top-down fashion simply by listing the nodes at each level of $T'$ starting from the root, and iteratively replacing every decision stump $h'\in\scH'$ with the corresponding $\scH$-based tree $T_{h'}$. Then, each leaf $z \in \scL(T_{h'})$ of $T_{h'}$ is replaced by copies of the left or right subtree of the decision stump $h'$ in $T'$ based on the values ($0$ or $1$) of the label $\ell_z$ of $z$. Clearly, the depth of $T$ is at most $d$ times the depth of $T'$.

We now bound the drop in $H_{\Pd}(c \mid T')$ when a leaf $z$ in the $\scH'$-based tree $T'$ is replaced by a decision stump in $\scH'$. Let $\Pd$ the distribution over $X$ conditioned on $x$ being mapped to $z$ and let $\Pd'$ its ``balanced'' version satisfying $\Pd'(c=1) = \Pd'(c=0) = \frac12$. Because of our weak learning assumption, we know there exists $h'_{z}\in\scH'$ with error at most $1/2-\gamma$ on $\Pd'$. By Lemma~\ref{lem:a.1}, $G_{\Pd'}(c \mid h'_{z})\leq (1-2\gamma^2)G_{\Pd'}(p'_{c\mid z})$, where $p'_{c\mid z} = \frac{1}{2}$ because of the balanced property of $P'$. Hence, by Lemma~\ref{lem:prop-5},
\begin{equation}
\label{eq:drop}
    G_{\Pd}(c \mid h'_{z})
\le
    (1-2\gamma^2)G_{\Pd}(p_{c\mid z}) \enspace.
\end{equation}
Let $T_{z}'$ be the tree $T'$ in which we replaced a leaf $z\in\scL(T')$ with the decision stump $h'_{z}\in\scH'$. Using \Cref{eq:drop},
\begin{equation}
    H_{\Pd}(c \mid T') - H_{\Pd}(c \mid T_{z}')
=
    \bigl( G_{\Pd}(p_{c\mid z}) - G_{\Pd}(c \mid h'_{z}) \bigr)\Pd(z)
\ge
    2\gamma^2G_{\Pd}(p_{c\mid z})\Pd(z) \enspace.
\end{equation}
Now let $T'_i$ be the tree after the algorithm has run for $i$ phases. Using the above inequality for each $z\in\scL(T'_i)$, we obtain
\begin{equation}
    H_{\Pd}(c \mid T'_i) - H_{\Pd}(c \mid T'_{i+1})
\ge
    \sum_{z\in\scL(T'_i)} 2\gamma^2 G_{\Pd}(p_{c\mid z}) \Pd(z)
=
    2\gamma^2 H_{\Pd}(c \mid T'_i) \enspace.
\end{equation}
Hence, after $m$ phases,
\begin{equation}
    L_\Pd(T'_m,c)
\le
    H_{\Pd}(c \mid T'_m) \le \bigl(1-2\gamma^2\bigr)^m H_{\Pd}(c \mid T'_0)
\le
    \frac{1}{2} e^{-2m\gamma^2} \enspace,
\end{equation}
where $T'_0$ is the initial tree consisting of a single leaf $z$ and, in the last inequality, we used the fact that $H_{\Pd}(c \mid T'_0) = G_{\Pd}(p_{c\mid z}) \le \frac{1}{2}$ and the inequality $1-x\le e^{-x}$. The proof is concluded by noting that $\frac{1}{2} e^{-2m\gamma^2} \le \eps$ for $m \ge \frac{1}{2\gamma^2}\log\frac{1}{2\varepsilon}$.
\end{proof}
}
We remark that we recover the standard setting of boosting decision trees when $d=1$. In this special case, our result matches the depth lower bound mentioned by \citet{KearnsM99}, while guaranteeing a $\scO\bigl(2^{\Depth{c}{\scH}{\eps \mid P}}\bigr) = \scO\bigl((1/\eps)^{1/(2\gamma^2)}\bigr)$ tree-size upper bound that is analogous to the ones by \citet{KearnsM99} and \citet{TakimotoM03}.

\section{Further Proofs for the Algebraic Characterization}\label{section:further_proofs}

\subsection{Algebraic characterization}

\closOfAlgIsCloseOfSigma*
{\renewcommand{\proofname}{Proof of Lemma~\ref{lem:clos_of_alg_is_clos_of_sigma}.}
\begin{proof}
    We begin with the proof of the first identity in the statement.
    The inclusion $\clos(\Alg(\scH)) \subseteq \clos(\sigma(\scH))$ immediately follows by definition of closure.
    We now show that the converse is also true.
    Let $T \in \clos(\sigma(\scH))$. Fix a distribution $\Pd \in \PD(X)$ and $\eps>0$. By definition of closure, there exists a sequence $A_1,A_2,\ldots \in \sigma(\scH)$ such that $\lim_{i \to \infty} \Pd(A_i \triangle T) = 0$. Consequently, for every $\eps > 0$ there exists some $i\in\N^+$ such that $\Pd(A_i \triangle T) \le \eps$. Thus, we can assume without loss of generality that the sequence $(A_i)_{i\in\N^+}$ satisfies $\Pd(A_i \triangle T) \le \eps_i$ for the choice $\eps_i = 2^{-i}$, for each $i\in\N^+$ (as we can select such a subsequence).
    Denote the restriction of $\Pd$ to $\sigma(\scH)$ as $\restr{\Pd}{\sigma(\scH)}$, that is $\restr{\Pd}{\sigma(\scH)}\colon\sigma(\scH)\to\RR_{\ge0}$ and $\restr{\Pd}{\sigma(\scH)}(A)=\Pd(A)$ for all $A\in\sigma(\scH)$. It is well known that, for each $i$, we can select an element $B_i\in\Alg(\scH)$ with $\restr{\Pd}{\sigma(\scH)}(B_i\triangle A_i)\leq \eps_i$ (see, e.g., \citet[Theorem~D, Section~13]{halmos2013measure}); hence, $\Pd(B_i\triangle A_i)\leq \eps_i$.
    By the triangle inequality $\Pd(T \triangle B_i) \le 2\eps_i = 2^{-i+1}$ for any $i$, which also implies that $\lim_{i\to\infty} \Pd(T \triangle B_i) = 0$ for the sequence $(B_j)_{j\in\N^+}$ in $\Alg(\scH)$. Therefore, $T\in\clos(\Alg(\scH))$.
\end{proof}
}

\subsection{Algebraic characterization for countable domains}
\algebraicCountable*
{\renewcommand{\proofname}{Proof of Theorem~\ref{thm:algebraic_countable}}
\begin{proof}
    Item 1 follows from Lemma~\ref{lem:countable_clos_sigma} Theorems~\ref{thm:algebraic_univ_approx} and~\ref{thm:algebraic_uniform_vc}, and items 2 and 3 from Lemma~\ref{lem:dfinterp_alg_clos}.
\end{proof}
}

\begin{lemma}\label{lem:countable_clos_sigma}
    Let $X$ be any countable domain and $\scH$ be any hypothesis class over $X$.
    Then, $\clos(\Alg(\scH)) = \sigma(\scH)$.
\end{lemma}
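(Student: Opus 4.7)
My plan is to combine Lemma~\ref{lem:clos_of_alg_is_clos_of_sigma} with a standard measure-theoretic argument exploiting countability. By that lemma, $\clos(\Alg(\scH)) = \clos(\sigma(\scH))$, so it suffices to show $\clos(\sigma(\scH)) = \sigma(\scH)$ when $X$ is countable. The inclusion $\sigma(\scH) \subseteq \clos(\sigma(\scH))$ is immediate (take the constant sequence). The content is in the reverse inclusion.

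The key idea is to choose a single distribution $\Pd^\star \in \PD(X)$ of full support. Since $X$ is countable, we can enumerate $X = \{x_1, x_2, \ldots\}$ and set $\Pd^\star(x_i) = 2^{-i}$ (or similar), so that every singleton has strictly positive mass. This makes $\Pd^\star$-null sets coincide with the empty set. Now let $c \in \clos(\sigma(\scH))$. By definition of closure, applied with $\Pd = \Pd^\star$, there is a sequence $A_1, A_2, \ldots \in \sigma(\scH)$ with $\Pd^\star(c \triangle A_n) \to 0$. Passing to a subsequence, I may assume $\Pd^\star(c \triangle A_{n_k}) \le 2^{-k}$, so that $\sum_k \Pd^\star(c \triangle A_{n_k}) < \infty$.

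By the (first) Borel--Cantelli lemma, $\Pd^\star\bigl(\limsup_k (c \triangle A_{n_k})\bigr) = 0$, so for $\Pd^\star$-almost every $x \in X$, $x$ lies in $c \triangle A_{n_k}$ for only finitely many $k$, i.e., $\mathbb{1}_{A_{n_k}}(x) \to \mathbb{1}_c(x)$. Because $\Pd^\star$ has full support, ``$\Pd^\star$-almost every'' means \emph{every} $x \in X$. Hence $x \in c$ if and only if $x \in A_{n_k}$ for all sufficiently large $k$, which gives the pointwise identity
\begin{equation}
c \;=\; \liminf_{k\to\infty} A_{n_k} \;=\; \bigcup_{K \in \NN^+} \bigcap_{k \ge K} A_{n_k}.
\end{equation}
Since $\sigma(\scH)$ is closed under countable unions and intersections and each $A_{n_k} \in \sigma(\scH)$, it follows that $c \in \sigma(\scH)$, completing the proof.

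The only subtle step is making sure a single distribution suffices to pin down $c$. The definition of $\clos$ allows the approximating sequence to depend on $\Pd$, but countability lets me pick one $\Pd^\star$ of full support that ``sees'' every point; then $\Pd^\star$-a.e.\ convergence upgrades to pointwise convergence, which is exactly the hypothesis needed to realize $c$ as a countable union of countable intersections from $\sigma(\scH)$. I do not expect any serious obstacle beyond verifying this upgrade carefully.
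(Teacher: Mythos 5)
Your proof is correct and follows essentially the same route as the paper's: both use \Cref{lem:clos_of_alg_is_clos_of_sigma}, fix a single full-support distribution on the countable domain, pass to a subsequence with geometrically decaying symmetric-difference mass, and identify the target concept as $\liminf_k A_{n_k} \in \sigma(\scH)$. The paper constructs the sets $B_i = \bigcap_{j\ge i} A_j$ explicitly and proves $B_i \subseteq A$ and $\Pd(A \triangle B) = 0$ by hand, whereas you obtain the identical conclusion more compactly by invoking the first Borel--Cantelli lemma and then promoting almost-everywhere convergence to pointwise convergence via full support; the underlying mechanism is the same.
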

\begin{proof}
    Clearly $\sigma(\scH) \subseteq \clos(\sigma(\scH)) = \clos(\Alg(\scH))$ by Lemma~\ref{lem:clos_of_alg_is_clos_of_sigma}. Now we prove the converse. Let $A \in \clos(\sigma(\scH))$ and let $\Pd \in \PD(X)$ such that $\Pd(x)>0$ for all $x\in X$; note that $\Pd$ exists as $X$ is countable and it also means that $\supp(\Pd)=X$. By definition of $\clos(\sigma(\scH))$, there exists a sequence $(A_i)_{i \in \N^+}$ in $\sigma(\scH)$ such that
    \begin{align}
        \lim_{i \to \infty} \Pd(A \triangle A_i) = 0 \enspace.
    \end{align}
    By selecting an appropriate subsequence, we can assume $\Pd(A \triangle A_i) \le 2^{-i}$ for all $i \in \N^+$ without loss of generality. Define
    \begin{align}
        B_i = \bigcap_{j \ge i} A_j \qquad \forall i \in \N^+
    \end{align}
    and observe that $B_i \in \sigma(\scH)$ for each $i\in\N^+$. Note that
    \begin{align}
        \Pd(A \triangle B_i) = \Pd\biggl(A \triangle \bigcap_{j \ge i} A_j\biggr) \le \sum_{j \ge i} \Pd(A \triangle A_j) \le 2^{-i+1}\enspace.
    \end{align}
    Note also that $B_i \subseteq A$ for all $i \in \N^+$. Suppose indeed this was not the case, then $B_i \setminus A \neq \emptyset$. Hence, by definition of $B_i$, there exists some $x \in A_j \setminus A$ for all $j \ge i$. Since $\Pd(x)>0$ by the choice of $\Pd$, we have the contradiction
    \begin{align}
        \lim_{i \to \infty} \Pd(A \triangle A_i) \ge \Pd(x) > 0\enspace.
    \end{align}
    Now consider the set
    \begin{align}
        B = \bigcup_{i \in \N^+} B_i = \lim_{i\to\infty} B_i\enspace.
    \end{align}
    Note that by construction $B \in \sigma(\scH)$.\footnote{In particular, $B = \liminf_{i\to\infty} A_i$.}
    Moreover, since $B_i \subseteq B_{i+1}$ and $B_i \subseteq A$ for all $i \in \N^+$, we have that the sequence $(A\triangle B_i)_{i\in\N^+}$ is downward monotone and thus
    \begin{align}
        \Pd(A \triangle B) = \Pd\biggl(\bigcap_{i\in\N^+} A\triangle B_i\biggr) = \lim_{i \to \infty} \Pd(A \triangle B_i) = 0\enspace.
    \end{align}
    Given that $\Pd$ has full support, this implies $A=B$.
\end{proof}

\begin{definition}\label{def:pointwise_convergence}
    Let $X$ be any set. A sequence $(h_i)_{i \in \N}$ in $2^X$ is \emph{pointwise convergent} to $h \in 2^X$ if 
    \begin{align}
        \forall x \in X \quad \exists i_x \in \N :\; \forall i \ge i_x \qquad x \in h_i \iff x \in h \enspace.
    \end{align}
\end{definition}

\begin{proposition}\label{pro:countable_compact}
    If $X$ is countable then every infinite sequence $(h_i)_{i \in \N}$ in $2^X$ contains an infinite subsequence that is pointwise convergent.
\end{proposition}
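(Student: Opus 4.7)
The plan is to prove the proposition via a standard diagonal extraction argument, exploiting the fact that for each point $x\in X$ the membership $x\in h_i$ is binary, so any infinite sequence admits an infinite subsequence on which this membership stabilizes. The countability of $X$ lets us iterate this over all points and then diagonalize.

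First I would enumerate $X = \{x_1, x_2, \ldots\}$, which is possible since $X$ is countable (the case of finite $X$ is even easier and handled analogously, or by padding). Set $S^{(0)} = (h_i)_{i\in\N}$. Inductively, given an infinite subsequence $S^{(k-1)} = (h_{i^{(k-1)}_j})_{j\in\N}$, observe that the set $\{j : x_k \in h_{i^{(k-1)}_j}\}$ and its complement partition $\N$, so at least one of them is infinite. Let $S^{(k)}$ be the infinite subsequence of $S^{(k-1)}$ extracted from that infinite side; then $x_k\in h$ (or $x_k\notin h$) holds for every term $h$ of $S^{(k)}$, a property we call ``$x_k$ is decided on $S^{(k)}$.'' By construction $S^{(k)}$ remains a subsequence of each earlier $S^{(\ell)}$, $\ell\le k$, so $x_1,\ldots,x_k$ are all decided on $S^{(k)}$.

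Now I would form the diagonal subsequence $(g_k)_{k\in\N}$ by letting $g_k$ be the $k$-th term of $S^{(k)}$. Since indices in the nested sequences are chosen to be strictly increasing, $(g_k)$ is indeed a subsequence of the original $(h_i)$. To verify pointwise convergence in the sense of Definition~\ref{def:pointwise_convergence}, fix any $x_k\in X$: for every $m\ge k$, the term $g_m$ lies in $S^{(m)}$, which is itself a subsequence of $S^{(k)}$, so $x_k$ is decided on $g_m$ with the same value it has on $S^{(k)}$. Thus the membership $x_k\in g_m$ is constant for all $m\ge k$, so we can define $h\in 2^X$ by $x_k\in h$ iff this common value is $1$, and the subsequence $(g_k)$ converges pointwise to $h$ with $i_{x_k} = k$.

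No step is really an obstacle; the only mild subtlety is bookkeeping the strict monotonicity of indices so that the diagonal truly forms a subsequence, which is handled by always choosing, at stage $k$, the $k$-th element of $S^{(k)}$ to have index strictly greater than the previously chosen one (achievable because each $S^{(k)}$ is infinite). This is the standard compactness argument for $\{0,1\}^{\N}$ and requires no hypothesis on $\scH$ beyond the countability of $X$.
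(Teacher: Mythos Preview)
Your diagonal extraction argument is correct and is the standard proof of sequential compactness of $\{0,1\}^{\N}$ under the product topology. The paper itself does not supply a proof of this proposition---it is stated as a known fact---so there is nothing to compare against; your write-up would serve perfectly well as the omitted justification.
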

Let $\scH \subseteq 2^X$ and let $\clospw(\scH)$ be the family of all subsets of $X$ that are the pointwise limit of some sequence in $\scH$. Clearly $\scH \subseteq \clospw(\scH) \subseteq 2^X$.
\begin{lemma}\label{lem:clos=clospw}
    If $X$ is countable then $\clospw(\scH)=\clos(\scH)$.
\end{lemma}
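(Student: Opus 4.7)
The plan is to prove the two inclusions separately, using the countability of $X$ only in the harder direction $\clos(\scH) \subseteq \clospw(\scH)$.

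For the easy direction $\clospw(\scH)\subseteq \clos(\scH)$, I would argue as follows. Let $h\in\clospw(\scH)$ and take a sequence $(h_i)_{i\in\N}$ in $\scH$ converging pointwise to $h$. Fix any $\Pd\in\PD(X)$. The indicator functions $\Ind{h\triangle h_i}$ converge pointwise to $0$ and are uniformly bounded by $1$, so by dominated convergence
\begin{equation}
    \lim_{i\to\infty}\Pd(h\triangle h_i) = \lim_{i\to\infty}\int \Ind{h\triangle h_i}\,d\Pd = 0 \enspace.
\end{equation}
Since this holds for every $\Pd$, we conclude $h\in\clos(\scH)$.

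For the nontrivial direction $\clos(\scH)\subseteq \clospw(\scH)$, I would exploit that $X$ is countable to choose a distribution with full support. Specifically, let $h\in\clos(\scH)$ and fix any $\Pd^*\in\PD(X)$ with $\Pd^*(x)>0$ for every $x\in X$ (such a $\Pd^*$ exists because $X$ is countable). By definition of $\clos(\scH)$, there exists a sequence $(h_i)_{i\in\N}$ in $\scH$ such that $\Pd^*(h\triangle h_i)\to 0$. By Proposition~\ref{pro:countable_compact}, this sequence contains a pointwise convergent subsequence $(h_{i_k})_{k\in\N}$; call its pointwise limit $h'\in 2^X$. By construction $h'\in\clospw(\scH)$.

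It remains to argue $h'=h$. Applying the easy direction already proved (i.e., dominated convergence) to the pointwise convergent subsequence $(h_{i_k})$ gives $\Pd^*(h'\triangle h_{i_k})\to 0$. Combining with $\Pd^*(h\triangle h_{i_k})\to 0$ and the triangle inequality $\Pd^*(h\triangle h')\le \Pd^*(h\triangle h_{i_k}) + \Pd^*(h_{i_k}\triangle h')$, we obtain $\Pd^*(h\triangle h')=0$. Since $\Pd^*$ has full support on $X$, this forces $h\triangle h'=\emptyset$, hence $h=h'\in\clospw(\scH)$.

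The main obstacle in this plan is the second direction: convergence in $\Pd^*$-measure for a single distribution does \emph{not} in general imply pointwise convergence, so one cannot simply read off pointwise convergence from the defining sequence of $\clos(\scH)$. The countability of $X$ is used exactly to sidestep this issue in two ways: it guarantees a full-support distribution (promoting $\Pd^*$-negligibility to equality of sets), and it makes the sequence space $2^X$ compact for pointwise convergence (via Proposition~\ref{pro:countable_compact}), so that a pointwise convergent subsequence always exists. Both ingredients fail for uncountable $X$, which is consistent with the fact that $\clos(\scH)$ and $\clospw(\scH)$ need not agree in that setting.
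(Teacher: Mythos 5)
Your proof is correct, and the easy direction matches the paper's. For the hard direction, however, you take a detour that the paper avoids, and you do so based on a misconception worth flagging. You assert that ``convergence in $\Pd^*$-measure for a single distribution does not in general imply pointwise convergence, so one cannot simply read off pointwise convergence from the defining sequence.'' That is true on uncountable domains (e.g., the typewriter sequence on $[0,1]$), but it is \emph{false} in the setting at hand: if $X$ is countable and $\Pd^*$ has full support, then $\Pd^*(h \triangle h_i) \to 0$ already forces $(h_i)$ to converge pointwise to $h$. Indeed, for any $x\in X$ one has $\Pd^*(x) > 0$, so if $x \in h \triangle h_i$ for infinitely many $i$ then $\Pd^*(h\triangle h_i) \ge \Pd^*(x)$ along that subsequence, contradicting convergence to zero; hence $x \notin h\triangle h_i$ for all large enough $i$, which is exactly the pointwise-convergence condition. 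This is precisely the observation the paper makes, and it eliminates the need to invoke Proposition~\ref{pro:countable_compact} to extract a pointwise-convergent subsequence and then reconcile its limit $h'$ with $h$.

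Your workaround is nonetheless sound: extracting a pointwise-convergent subsequence, applying the easy direction to it, and then using the full support of $\Pd^*$ to upgrade $\Pd^*(h\triangle h')=0$ to $h=h'$ are all valid steps, and you correctly identified that countability enters both through full support and through the compactness of $2^X$ for pointwise convergence. The tradeoff is that you spend an extra lemma (Proposition~\ref{pro:countable_compact}) to patch an obstacle that does not actually arise here; the paper's proof is shorter because it recognizes that the obstacle is absent. If you wanted your argument as written to be justified as a ``route around'' something, the right framing would be that it generalizes more gracefully to settings where full support is unavailable but compactness still holds, rather than that it circumvents a real failure of the direct approach.
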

\begin{proof}
    To see that $\clospw(\scH) \subseteq \clos(\scH)$, recall the definition of pointwise convergence, and note how it implies that if a sequence $(h_i)_{i \in \N}$ converges pointwise to $h$ then $\lim_{i \to \infty} \Pd(h_i \triangle h) = 0$ for every $\Pd \in \PD(X)$. To see that $\clospw(\scH) \supseteq \clos(\scH)$, choose any sequence $(h_i)_{i \in \N}$ that converges to some $h \in \clos(\scH)$ under an appropriate distribution $\Pd \in \PD(X)$ such that $\supp(\Pd)=X$ (which exists as $X$ is countable); observe that this implies the pointwise convergence of $(h_i)_{i \in \N}$ to $h$.
\end{proof}

\begin{lemma}\label{lem:dfinterp_alg_clos}
If $X$ is countable and $\scH$ is a VC class over $X$, then $\clos(\Alg_d(\scH)) \subseteq \Alg(\clos(\scH))$ for every $d \in \N$.
\end{lemma}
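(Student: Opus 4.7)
The plan is to take an arbitrary $c \in \clos(\Alg_d(\scH))$ and produce an explicit depth-$d$ tree, with splits in $\clos(\scH)$ and the same leaf labels as some subsequence of approximants, that represents $c$ exactly. This will certify $c \in \Alg(\clos(\scH))$.

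First, since $X$ is countable, \Cref{lem:clos=clospw} lets us replace closure by pointwise convergence: pick a sequence $(T_i)_{i\in\N^+}$ in $\Alg_d(\scH)$ converging pointwise to $c$. Each $T_i$ has depth at most $d$, so its rooted binary shape comes from a finite set and its $\le 2^d$ leaf labels come from $\bool$. By iterated pigeonhole on the shape and on the labels, extract a subsequence (still indexed by $i$) along which every $T_i$ shares a fixed shape $\tau$ and assigns the same label $\ell_z \in \bool$ to each leaf $z$ of $\tau$.

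Next, enumerate the internal nodes $v_1,\dots,v_m$ of $\tau$ (with $m \le 2^d-1$) and let $h_{v_j,i} \in \scH$ be the split at $v_j$ in $T_i$. Starting from the current subsequence, apply \Cref{pro:countable_compact} to $(h_{v_1,i})_i$ to extract a pointwise convergent subsequence, then apply it on that subsequence to $(h_{v_2,i})_i$, and so on for $m$ steps. This yields a single subsequence along which $h_{v_j,i}$ converges pointwise to some $h_{v_j}$ for every $j$; by \Cref{lem:clos=clospw}, each $h_{v_j}$ lies in $\clospw(\scH)=\clos(\scH)$. Let $T$ be the tree of shape $\tau$ with split $h_{v_j}$ at $v_j$ and leaf labels $\ell_z$. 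By construction $T$ is a finite Boolean combination of elements of $\clos(\scH)$, hence $T \in \Alg(\clos(\scH))$.

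Finally, verify $T = c$ pointwise. Fix $x \in X$. The path followed by $x$ in any tree of shape $\tau$ is determined by the finitely many split values at the internal nodes encountered. Pointwise convergence of each $h_{v_j,i}$ at $x$, together with finiteness of $\{v_1,\dots,v_m\}$, implies that for all sufficiently large $i$ the splits along the root-to-leaf path agree between $T_i$ and $T$ at $x$, so $x$ reaches the same leaf and receives the same label: $T_i(x) = T(x)$ eventually. Combined with $T_i(x) \to c(x)$, this gives $T(x) = c(x)$, so $c \in \Alg(\clos(\scH))$.

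The only potential obstacle is keeping the iterated subsequence extractions rigorous, but since the shape, the number of leaves, and the number of internal nodes are each bounded by a function of $d$, all extractions are finite and no diagonalization is needed. I note that the VC hypothesis on $\scH$ does not appear to be used in this lemma; it is presumably inherited for uniformity with the other items of \Cref{thm:algebraic_countable}.
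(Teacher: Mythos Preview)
Your proof is correct and follows essentially the same route as the paper: pass to pointwise convergence via \Cref{lem:clos=clospw}, fix the tree shape and leaf labels by finite pigeonhole, then iterate \Cref{pro:countable_compact} over the finitely many internal nodes to obtain limit splits in $\clos(\scH)$, and conclude $T=c$ pointwise. The only cosmetic differences are that the paper fixes the shape by padding every $T_i$ to a complete depth-$d$ tree rather than pigeonholing on shapes, and extracts the constant-label subsequence after (rather than before) stabilizing the splits; your observation that the VC assumption is not actually used in this lemma is also accurate.
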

\begin{proof}
Let $d\in\N$ and $c \in \clos(\Alg_d(\scH))$. By Lemma~\ref{lem:clos=clospw}, $c \in \clospw(\Alg_d(\scH))$, so there exists an infinite sequence of trees $(T_i)_{i \in \N}$ in $\Alg_d(\scH)$ that converge pointwise to $c$. Without loss of generality, we may assume that every $T_i$ is a complete tree of depth $d$.\footnote{One can always complete $T_i$ using internal nodes that hold, \eg the decision rule of the root.} Now consider the sequence $(h^1_i)_{i \in \N}$ of decision rules used by the first node (say, the root) of those trees. By Proposition~\ref{pro:countable_compact} there is an infinite subsequence $(h^1_{i_j})_{j \in \N}$ that is pointwise convergent to some $h^1\in\scH$. Now consider the infinite sequence of trees $(T_{i_j})_{j \in \N}$, and repeat the argument for the second node (say, a child of the root corresponding to a specific output of the decision stump at the root). By repeating the argument $2^d-1$ times (one for every internal node of the trees) we obtain an infinite sequence $(T^*_i)_{i \in \N}$ of trees in $\Alg_d(\scH)$ that converge pointwise to $c$ and such that at every node $v$ the decision rules converge pointwise to some $h^v$. Now let $T^*$ be the decision tree obtained by using $h^v$ as decision rule at $v$. We observe that $T^*=c$. Let $x \in X$. By Definition~\ref{def:pointwise_convergence}, for each node $v$ there exists $i^v_x$ such that $x \in h^v_i$ iff $x \in h^v$ for every $i \ge i^v_x$, where $h^v_i$ is the stump used at $v$ by $T^*_i$. By letting $i_x = \max_v i^v_x$ it follows that $x \in h^v_i$ iff $x \in h^v$ for every $i \ge i_x$ and all nodes $v$ simultaneously. Therefore all trees $T^*_i$ with $i \ge i_x$ send $x$ to the same leaf, and moreover that leaf remains the same if we use $h^v$ at $v$. Note also that, since $(T^*_i)_{i \in \N}$ is infinite, then we can assume that every leaf predicts the same label in all $T^*_i$ (since there is certainly an infinite subsequence that satisfies such a constraint). It follows that $(T^*_i)_{i \in \N}$ converges pointwise the tree $T^*$ that uses the limit stump $h^v$ at $v$. But the labeling of $(T^*_i)_{i \in \N}$ converges pointwise to $c$, too. We conclude that $T=T^*$. Finally, note that by construction $h^v \in \clospw(\scH)$, and thus by Lemma~\ref{lem:clos=clospw} $h^v \in\clos(\scH)$, for all $v$, hence $T^* \in \Alg(\clos(\scH))$. It follows that $c \in \Alg(\clos(\scH))$.
\end{proof}

\section{Remarks on the Graded Complexity Measure Results} \label{apx:gcm_remark}

In Section~\ref{sec:gen-rep} we demonstrated more general guarantees for any graded complexity measure $\Gamma$, given any domain $X$ and any hypothesis class $\scH$ over $X$.
Observe that, when $\scH$ is a non-VC class, item (3$'$) of Theorem~\ref{thm:trichotomy-gcm} states an upper bound on the $\Gamma$-complexity rate of order $\scO\bigl(\frac{1}{\eps^d}\bigr)$ for a constant $d\in\N$.
This bound is indeed larger compared to the previous guarantee of $\scO(\log(1/\eps))$ on the depth of $\scH$-based decision trees (Theorem~\ref{thm:trichotomy}) and it has to do with the generality of the definition of graded complexity measure.

Keeping this in mind, we remark that it is possible to recover the $\scO(\log(1/\eps))$ $\Gamma$-complexity rate bound under a stronger assumption on the graded complexity measure $\Gamma$.
In particular, it is sufficient for $\Gamma$ to satisfy
\begin{equation}
    \Gamma(f_1 \cup f_2) \le 1 + \max\{\Gamma(f_1), \Gamma(f_2)\} \qquad \forall f_1,f_2\in\Alg(\scH) \enspace.
\end{equation}
Note that this condition is satisfied when $\Gamma$ corresponds to the depth of $\scH$-based decision trees. For example, consider a similar representation of trees as in Equation~\eqref{eq:gcm_tree_to_alg} using directly $\scH$ for the decision rules of the internal nodes.

Thus, we can follow the same steps as in the proof of Theorem~\ref{thm:trichotomy-gcm} with a particular focus on the construction of $A$ from the decision tree $T$ in Equation~\eqref{eq:gcm_tree_to_alg}.
It immediately follows that $\Gamma(A_v^T) \le 3 + \Gamma(A_v) + \max\bigl\{\Gamma(A_u^T), \Gamma(A_w^T)\bigr\}$ for any internal node $v\notin\scL(T)$, where $u$ and $w$ are, respectively, the left and right child of $v$.
Now, let $\rho(z) \subseteq \scV(T)$ be the nodes along the path from the root of $T$ to the leaf $z\in\scL(T)$.
We can thus show that
\begin{equation}
    \Gamma(A) = \scO\Biggl(\max_{z\in\scL(T)} \sum_{v\in\rho(z)} (\Gamma(A_v)+1)\Biggr) = \scO\bigl((k+1)\cdot\depth(T)\bigr) = \scO\Bigl(\log\frac{1}{\eps}\Bigr) \enspace,
\end{equation}
where we used the fact that $T$ has $\depth(T) \le \frac{1}{2\gamma^2}\log\frac{1}{2\eps}$ and that $\Gamma(A_v)\le k$ for any internal node $v$ of $T$.

\end{document}